\documentclass{article}
\usepackage[paperheight=11in,paperwidth=8.5in]{geometry}


\usepackage{authblk}
\usepackage{microtype}
\usepackage{graphicx}
\usepackage{caption}
\usepackage{subcaption}
\usepackage{multirow}
\usepackage{booktabs} 
\usepackage{wrapfig}
\usepackage{hyperref}
\usepackage{nccmath}
\usepackage{comment}
\usepackage{algorithm,algorithmic,amsmath}
\PassOptionsToPackage{numbers}{natbib}

\usepackage{natbib}




\usepackage[utf8]{inputenc} 
\usepackage[T1]{fontenc}    
\usepackage{hyperref}       
\usepackage{url}            
\usepackage{booktabs}       
\usepackage{amsfonts}       
\usepackage{nicefrac}       
\usepackage{microtype}      
\usepackage{xcolor}         

\usepackage{amsmath}
\usepackage{amssymb}
\usepackage{mathtools}
\usepackage{amsthm,balance}

\usepackage[capitalize,noabbrev]{cleveref}

\theoremstyle{plain}
\newtheorem{theorem}{Theorem}[section]
\newtheorem{proposition}[theorem]{Proposition}
\newtheorem{lemma}[theorem]{Lemma}

\theoremstyle{definition}

\theoremstyle{remark}

\DeclareMathOperator{\tr}{tr}
\DeclareMathOperator{\adj}{adj}

\author[1]{Xingzi Xu\thanks{xingzi.xu@duke.edu}}
\author[2]{Ali Hasan\thanks{ali.hasan@duke.edu}}
\author[1]{Khalil Elkhalil\thanks{khalil.elkhalil@duke.edu}}
\author[3]{Jie Ding\thanks{dingj@umn.edu}}
\author[1]{Vahid Tarokh\thanks{vahid.tarokh@duke.edu}}

\affil[1]{Department of Electrical and Computer Engineering, Duke University}
\affil[2]{Department of Biomedical Engineering, Duke University}
\affil[3]{School of Statistics, University of Minnesota-Twin Cities}

\usepackage[textsize=tiny]{todonotes}
\date{}
\title{\textbf{Characteristic Neural Ordinary Differential Equations}}

\begin{document}
\bibliographystyle{plainnat}



\maketitle
\begin{abstract}

We propose Characteristic-Neural Ordinary Differential Equations (C-NODEs), a framework for extending Neural Ordinary Differential Equations (NODEs) beyond ODEs.
While NODEs model the evolution of a latent variables as the solution to an ODE, C-NODE models the evolution of the latent variables as the solution of a family of first-order quasi-linear partial differential equations (PDEs) along curves on which the PDEs reduce to ODEs, referred to as characteristic curves.
This in turn allows the application of the standard frameworks for solving ODEs, namely the adjoint method.
Learning optimal characteristic curves for given tasks improves the performance and computational efficiency, compared to state of the art NODE models.
We prove that the C-NODE framework extends the classical NODE on classification tasks by demonstrating explicit C-NODE representable functions not expressible by NODEs. 
Additionally, we present C-NODE-based continuous normalizing flows, which describe the density evolution of latent variables along multiple dimensions.
Empirical results demonstrate the improvements provided by the proposed method for classification and density estimation on CIFAR-10, SVHN, and MNIST datasets under a similar computational budget as the existing NODE methods.
The results also provide empirical evidence that the learned curves improve the efficiency of the system through a lower number of parameters and function evaluations compared with baselines.

\end{abstract}

\section{Introduction}
\begin{wrapfigure}{R}{0.4\textwidth}
    \centering
    \includegraphics[width=0.4\textwidth, trim= 1cm 2cm 0cm 0cm,clip]{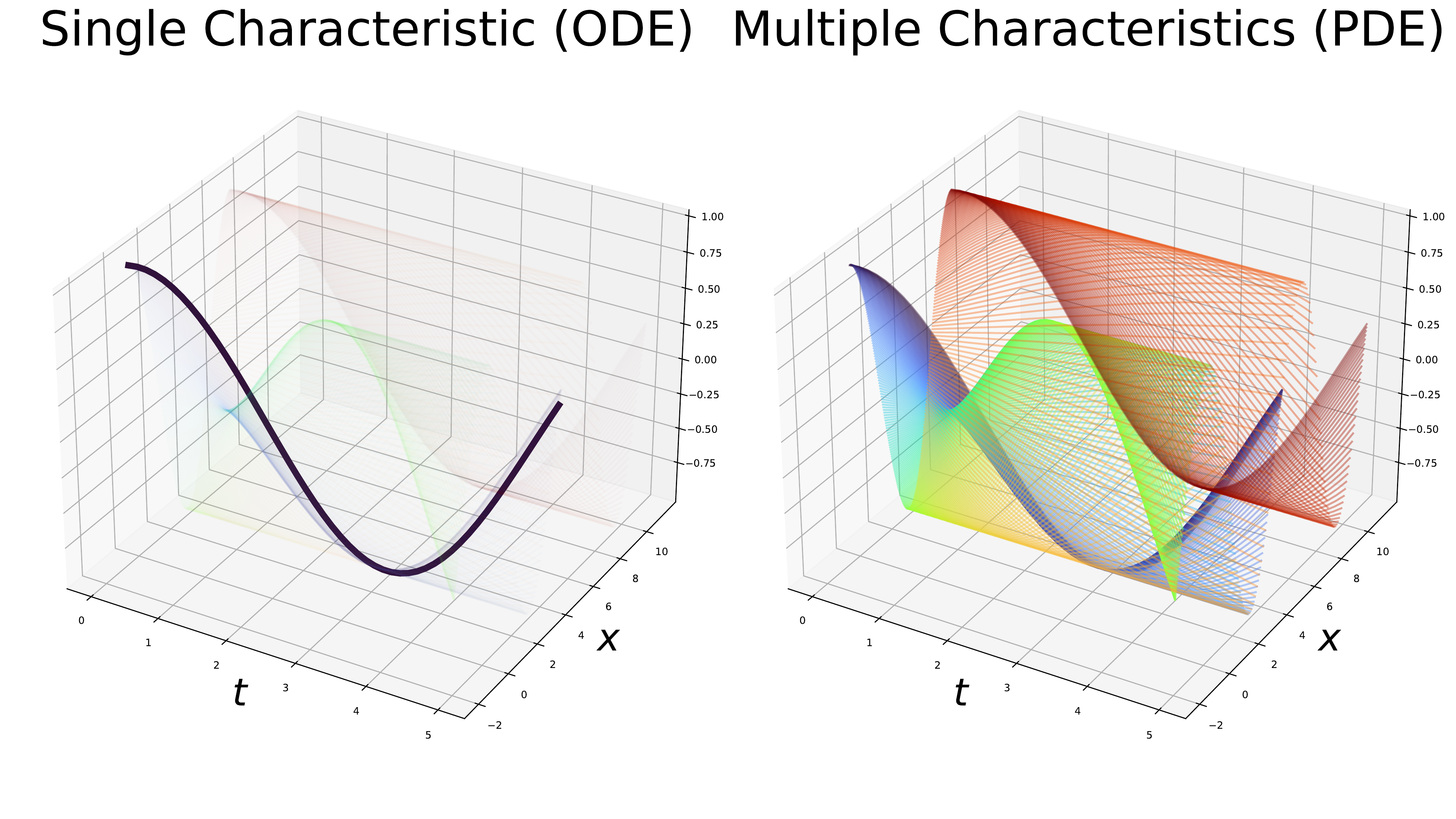}
    \caption{Comparison of traditional NODE (left) and proposed C-NODE (right). The solution to NODE is the solution to a single ODE, whereas C-NODE represents a series of ODEs that form the solution to a PDE. Each color in C-NODE represents the solution to an ODE with a different initial condition. NODE represents a single ODE, and can only represent $u(x,t)$ along one dimension, for example, $u(x=0,t)$.
    \vspace{-15pt}}
    \label{fig:motivation}
\end{wrapfigure}
Deep learning and differential equations share many connections, and techniques in the intersection have led to insights in both fields.
One predominant connection is based on certain neural network architectures resembling numerical integration schemes, leading to the development of Neural Ordinary Differential Equations (NODEs) \citep{chen2019neural}. 
NODEs use a neural network parameterization of an ODE to learn a mapping from observed variables to a latent variable that is the solution to the learned ODE.
A central benefit of NODEs is the constant memory cost, where backward passes are computed using the adjoint sensitivity method rather than backpropagating through individual forward solver steps. 
Moreover, NODEs provide a flexible probability density representation often referred to as \emph{continuous normalizing flows} (CNFs).  
However, since NODEs can only represent solutions to ODEs, the class of functions is somewhat limited and may not apply to more general problems that do not have smooth and one-to-one mappings.
To address this limitation, a series of analyses based on methods from differential equations have been employed to enhance the representation capabilities of NODEs, such as the technique of controlled differential equations \citep{kidger2020neural}, learning higher-order ODEs \citep{massaroli2021dissecting}, augmenting dynamics \citep{dupont2019augmented}, and considering dynamics with delay terms \citep{zhu2021neural}.
Moreover, certain works consider generalizing the ODE case to partial differential equations (PDEs), such as in \citet{ruthotto2020deep,sun2019neupde}.
However, these methods do not use the adjoint method, removing the primary advantage of constant memory cost. 
This leads us to the central question motivating the work: can we combine the benefits of the rich function class of PDEs with the efficiency of the adjoint method?
To do so, we propose a method of continuous-depth neural networks that solves a PDE over parametric curves that reduce the PDE to an ODE. 
Such curves are known as \emph{characteristics}, and they define the solution of the PDE in terms of an ODE \citep{griffiths2015essential}.
The proposed Characteristic Neural Ordinary Differential Equations (C-NODE) learn both the characteristics and the ODE along the characteristics to solve the PDE over the data space. 
This allows for a richer class of models while still incorporating the same memory efficiency of the adjoint method. 
The proposed C-NODE is also an extension of existing methods, as it improves the empirical accuracy of these methods in classification tasks and image quality in generation tasks.

\section{Related Work}\label{sec:related}
We discuss the related work from both a machine learning and numerical analysis perspective. 

\subsection{Machine Learning and ODEs}
NODE is often motivated as a continuous form of a Residual Network (ResNet) \citep{he2015deep}, since the ResNet can be seen as a forward Euler integration scheme on the latent state \citep{e2017dynamical}. 
Specifically, a ResNet is composed of multiple blocks where each block can be represented as:
$$u_{t+1}=u_t+f(u_t,\theta),$$
where $u_t$ is the evolving hidden state at time $t$ and $f(u_t,\theta)$ represents the gradient at time $t$, namely $\frac{du}{dt}(u_t)$. Generalizing the model to a step size given by $\Delta t$, we have:
$$u_{t+\Delta t}=u_t+f(u_t,\theta)\Delta t.$$

To adapt this model to a continuous setting, we let $\Delta t\rightarrow 0$ and obtain:
$$\lim\limits_{\Delta t \rightarrow 0}\frac{u_{t+\Delta t}-u_t}{\Delta t}=\frac{du(t)}{dt}.$$
The model can then be evaluated through existing numerical integration techniques, as proposed by \citep{chen2019neural}:
\begin{align*}
u(t_1) &=u(t_0)+\int_{t_0}^{t_1}\frac{du(t)}{dt}(u(t),t) \mathrm{d}t=u(t_0)+\int_{t_0}^{t_1}f(u(t),t,\theta) \mathrm{d}t.
\end{align*}
Numerical integration can then be treated as a black box, using numerical schemes beyond the forward Euler to achieve higher numerical precision. 
However, since black box integrators can take an arbitrary number of intermediate steps, backpropagating through individual steps would take too much memory since the individual steps must be saved. 
\citet{chen2019neural} solved this problem by using adjoint backpropagation, which has a constant memory usage. 
For a given loss function on the terminal state of the hidden state $\mathcal{L}(u(t_1))$, the adjoint $a(t)$ is governed by another ODE:
$$\frac{da(t)}{dt}=-a(t)^\intercal\frac{\partial f(u(t),t,\theta)}{\partial u}, \quad a(t_1)=\frac{\partial \mathcal{L}}{\partial u(t_1)},$$
that dictates the gradient with respect to the parameters.
The loss $\mathcal{L}(u(t_1))$ can then be calculated by solving another ODE (the adjoint) rather than backpropagating through the calculations involved in the numerical integration.

However, the hidden state governed by an ODE imposes a limitation on the expressiveness of the mapping.
For example, in \citet{dupont2019augmented}, the authors describe a notable limitation of NODEs is in the inability to represent dynamical systems with intersecting trajectories. 
In response to such limitations, many works have tried to increase the expressiveness of the mapping.
\citet{dupont2019augmented} proposed to solve the intersection trajectories problem by augmenting the vector space, lifting the points into additional dimensions; \citet{zhu2021neural} included time delay in the equation to represent dynamical systems of greater complexity; \citet{massaroli2021dissecting} proposed to condition the vector field on the inputs, allowing the integration limits to be conditioned on the input; \citet{massaroli2021dissecting} and \citet{norcliffe2020second} additionally proposed and proved a second-order ODE system can efficiently solve the intersecting trajectories problem.

Multiple works have attempted to expand NODE systems to other common differential equation formulations. \citet{sun2019neupde} employed a dictionary method and expanded NODEs to a PDE case, achieving high accuracies both in approximating PDEs and in classifying real-world image datasets. 
However, \citet{sun2019neupde} suggested that the method is unstable when training with the adjoint method and therefore is unable to make use of the benefits that come with training with adjoint. \citet{zhang2018mongeampere} proposed a normalizing flow approach based on the Monge-Ampere equation. 
However, \citet{zhang2018mongeampere} did not consider using adjoint-based training. 
\citet{long2018pde,long2019pde,raissi2019686,brunton2016discovering} considered discovering underlying hidden PDEs from data and predict dynamics of complex systems.
\citet{kidger2020neural, morrill2021neural,morrill2021rough} used ideas from rough path theory and controlled differential equations to propose a NODE architecture as a continuous recurrent neural network framework. Multiple works have expanded to the stochastic differential equations setting and developed efficient optimization methods for them \citep{guler2019robust,jia2019neural,jia2020neural,kidger2021efficient,kidger2021neural,li2020scalable,liu2019neural,xu2022infinitely}. \citet{salvi2022spde} considered stochastic PDEs for spatio-temporal dynamics prediction. Additionally, \citet{chen2020neural} models spatio-temporal data using NODEs, and \citet{rubanova2019latent,de2019gru} makes predictions on time series data using NODEs. Physical modeling is also a popular application of NODEs, as control problems are often governed by latent differential equations that can be discovered with data driven methods \citep{cranmer2020lagrangian,greydanus2019hamiltonian,pmlr-v139-yildiz21a,zhong2019symplectic}. 

NODE systems have also been used for modeling the flow from a simple probability density to a complicated one \citep{chen2019neural}. Specifically, if $u(t)\in\mathbb{R}^n$ follows the ODE $du(t)/dt=f(u(t))$, where $f(u(t))\in\mathbb{R}^n$, then its log likelihood from \citet[Appendix A]{chen2019neural}:
\begin{align}
\label{eqn:odeflow}
    \frac{\partial \log p(u(t))}{\partial t}=-\tr\left(\frac{df}{du(t)}\right)
\end{align}
The trace can be calculated efficiently with a Hutchinson trace estimator \citep{grathwohl2019ffjord}. Subsequent work uses invertible ResNet, optimal transport theory, among other techniques to further improve the performance of CNFs \citep{abdal2021styleflow,Behrmann2019InvertibleRN,chen2019ResidualFF,durkan2019neural,hoogeboom2019emerging,Huang2021ConvexPF,toth2019hamiltonian,yildiz2019ode2vae,zhang2018mongeampere}. CNF is desirable for having no constraints on the type of neural network used, unlike discrete normalizing flows, which often have constraints on the structure of the latent features \citep{Dinh2017DensityEU,Papamakarios2017MaskedAF,JimenezRezende2015VariationalIW}. CNFs also inspire development in other generative modeling methods. For instance, a score-based generative model can be seen as a probability flow modeled with an ODE \citep{Song2021ScoreBasedGM,Vahdat2021ScorebasedGM}.

\section{Method}
We describe the proposed C-NODE method in this section by first introducing the method of characteristics (MoC) for solving PDEs with an illustrative example.
We then discuss how to apply MoC to our C-NODE framework.
We finally discuss the types of PDEs we can describe using this method.

\subsection{Method of Characteristics}
\label{sec:moc}
The MoC provides a procedure for transforming certain PDEs into ODEs along paths known as \emph{characteristics}. 
In the most general sense, the method applies to general hyperbolic differential equations; however, for illustration purposes, we will consider a canonical example using the inviscid Burgers equation.
A complete exposition on the topic can be found in \citet[Chapter 9]{griffiths2015essential}, but we will introduce some basic concepts here for completeness. 
Let $u(x,t) : \mathbb{R} \times \mathbb{R}_+ \to \mathbb{R}$ satisfy the following inviscid Burgers equation
\begin{equation}
\label{eq:burger}
\frac{\partial u}{\partial t} + u \frac{\partial u}{\partial x} = 0,
\end{equation}
where we dropped the dependence on $x$ and $t$ for ease of notation. 
We are interested in the solution of $u$ over some bounded domain $\Omega \subset \mathbb{R} \times \mathbb{R}_+$.
We will now introduce parametric forms for the spatial component $x(s) : [0,1] \to \mathbb{R}$ and temporal components $t(s) : [0,1] \to \mathbb{R}_+$ over the fictitious variable $s \in [0,1]$. 
Intuitively, this allows us to solve an equation on curves $x, t$ as functions of $s$ which we denote $(x(s), t(s))$ as the \emph{characteristic}.
Expanding, and writing $\mathrm{d}$ as the total derivative, we get
\begin{equation}
\frac{\mathrm{d}}{\mathrm{d}s} u(x(s), t(s)) = \frac{\partial u}{\partial x}\frac{dx}{ds} + \frac{\partial u}{\partial t}\frac{dt}{ds}.
\label{eq:chara_expansion}
\end{equation}
Recalling the original PDE in \eqref{eq:burger} and substituting the proper terms into \eqref{eq:chara_expansion} for $\mathrm{d}x/\mathrm{d}s = u,\, \mathrm{d}t/\mathrm{d}s = 1, \,\mathrm{d}u/\mathrm{d}s = 0$, we then recover~\eqref{eq:burger}. 
Solving these equations, we can obtain the characteristics as $x(s) = us + x_0$ and $t(s) = s + t_0$ as functions of initial conditions $x_0, t_0$, and in the case of time, we let $t_0 = 0$.
Finally, by solving over many initial conditions $x_0 \in \partial \Omega$, we can obtain the solution of the PDE over $\Omega$. 
Putting it all together, we have a new ODE that is written as 
$$
\frac{\mathrm{d}}{\mathrm{d}s} u(x(s),t(s)) = \frac{\partial u}{\partial t} + u \frac{\partial u}{\partial x} = 0,
$$
\noindent where we can integrate over $s$ through
\begin{align*}
u(x(T),t(T);x_0,t_0) &:= \int_0^T \frac{\mathrm{d}}{\mathrm{d}s}u(x(s), t(s))\mathrm{d} s \\
 &:= \int_0^T \frac{\mathrm{d}}{\mathrm{d}s}u(us+x_0, s)\mathrm{d} s,
\end{align*}
using the adjoint method with boundary conditions $x_0, t_0$.
This contrasts the usual direct integration over $t$ that is done in NODE; we now jointly couple the integration through the characteristics.
An example of solving this equation over multiple initial conditions is given in Figure~\ref{fig:motivation}. 

\subsection{Neural Representation of Characteristics}
\label{sec:neural_chara}
In the proposed method, we learn the components involved in the MoC, namely the characteristics and the function coefficients.
We now generalize the example given in~\ref{sec:moc}, which involved two variables, to a $k$-dimensional system. 
Specifically, consider the following nonhomogeneous boundary value problem (BVP)

\begin{align}
\label{eq:qlpde}
\begin{cases}
    \frac{\partial \mathbf{u}}{\partial t}+\sum_{i=1}^k a_{i}(x_1,...,x_k,\mathbf{u})\frac{\partial \mathbf{u}}{\partial x_{i}}=\mathbf{c}(x_1,...,x_k,\mathbf{u}), & \text{on } \mathbf{x}, t \in \mathbb{R}^k\times[0,\infty) \\
    \mathbf{u}(\mathbf{x}(0)) = \mathbf{u}_0, &\text{on } \mathbf{x} \in \mathbb{R}^k.
\end{cases}
\end{align}

Here, $\mathbf{u}:\mathbb{R}^{k}\rightarrow\mathbb{R}^{n}$ is a multivariate map, $a_{i} : \mathbb{R}^{k+ n} \to \mathbb{R}$ and $\mathbf{c} : \mathbb{R}^{k+ n}\to \mathbb{R}^{n}$ be functions dependent on values of $\mathbf{u}$ and $x$'s. This problem is well-defined and has a solution so long as $\sum_{i=1}^k a_{i}\frac{\partial \mathbf{u}}{\partial x_{i}}$ is continuous~\citep{evans2010pde}.

MoC has historically been used in a scalar context, but generalization to the vector case is relatively straightforward. 
A proof of the generalization can be found in Appendix~\ref{proof:moc}. 
Following MoC, we decompose the PDE in~\eqref{eq:qlpde} into the following system of ODEs
\begin{align}
    \frac{dx_{i}}{ds}&=a_{i}(x_1,...,x_k,\mathbf{u}),\\
    \frac{d \mathbf{u}}{ds}&=\sum_{i=1}^{k}\frac{\partial \mathbf{u}}{\partial x_{i}}\frac{dx_{i}}{ds}=\mathbf{c}(x_1,...,x_k,\mathbf{u}).
    \label{eq:odesys}
\end{align}
We represent this ODE system by parameterizing $d x_{i}/ds$ and $\partial \mathbf{u}/\partial x_{i}$ with neural networks. 
Consequently, $d\mathbf{u}/ds$ is evolving according to~\eqref{eq:odesys}.


Following this expansion, we arrive at
\begin{align}
\label{eq:integral}
\mathbf{u}(\mathbf{x}(T)) & = \mathbf{u}(\mathbf{x}(0)) + \int_0^{T} \frac{\mathrm{d}\mathbf{u}}{\mathrm{d}s}\left(\mathbf{x},\mathbf{u}\right)\mathrm{d}s  \\
& \nonumber =\mathbf{u}(\mathbf{x}(0)) + \int_0^{T} [\mathbf{J}_\mathbf{x} \mathbf{u}]\left (\mathbf{x} , \mathbf{u}; \Theta_2 \right) \frac{d \mathbf{x}}{d s} \left ( \:\mathbf{x},\mathbf{u};  \Theta_2 \right ) \mathrm{d}s,
\end{align}
where we remove $\mathbf{u}$'s dependency on $\mathbf{x}(s)$ and $\mathbf{x}$'s dependency on $s$ for simplicity of notation. 
In Equation \eqref{eq:integral}, the functions $\mathbf{J}_\mathbf{x}\mathbf{u}$ and $d\mathbf{x}/ds$ are learnable functions which are the outputs of deep neural networks with inputs $\mathbf{x},\,\mathbf{u}$ and parameters $\Theta_2$.

\subsection{Conditioning on data}
Previous works primarily modeled the task of classifying a set of data points with a fixed differential equation, neglecting possible structural variations lying in the data. Here, we condition C-NODE on each data point, thereby solving a PDE with a different initial condition. 
Specifically, consider the term given by the integrand in~\eqref{eq:integral}.
The neural network representing the characteristic $d \mathbf{x}/ds$ is conditioned on the input data $\mathbf{z}\in \mathbb{R}^{w}$.
Define a feature extractor function $\mathbf{g}(\cdot):\, \mathbb{R}^{w}\to \mathbb{R}^{n}$ and we have 
\begin{align}
\label{eq:condition}
    \frac{dx_{i}}{ds}=a_{i}(x_1,\ldots,x_k, \mathbf{u};\mathbf{g}(\mathbf{z})).
\end{align}

By introducing $\mathbf{g}(\mathbf{z})$ in~\eqref{eq:condition}, the equation describing the characteristics changes depending on the current data point. 
This leads to the classification task being modeled with a family rather than one single differential equation. 
\subsection{Training C-NODEs}
After introducing the main components of C-NODEs, we can integrate them into a unified algorithm.
To motivate this section, and to be consistent with part of the empirical evaluation, we will consider classification tasks with data $\left \{ \left (\mathbf{z}_j, \mathbf{y}_j\right)\right\}_{j=1}^N, \:\: \mathbf{z}_j \in \mathbb{R}^w, \:\: \mathbf{y}_j \in \mathbb{Z}^+$.
For instance, $\mathbf{z}_j$ may be an image, and $\mathbf{y}_j$ is its class label. 
In the approach we pursue here, the image $\mathbf{z}_j$ is first passed through a feature extractor function $\mathbf{g}(\cdot;\Theta_1):\mathbb{R}^w\rightarrow\mathbb{R}^n$ with parameters $\Theta_1$. The output of $\mathbf{g}$ is the feature $\mathbf{u}_0^{(j)} =  \mathbf{g}(\mathbf{z}_j;\Theta_1)$ that provides the boundary condition for the PDE on $\mathbf{u}^{(j)}$. 
We integrate along different characteristic curves indexed by $s \in [0,T]$ with boundary condition  $\mathbf{u}^{(j)}(\mathbf{x}(0)) = \mathbf{u}_0^{(j)}$, and compute the end values as given by~\eqref{eq:integral}, where we mentioned in Section~\ref{sec:neural_chara}, 
\begin{align}
 \mathbf{u}^{(j)}(\mathbf{x}(T))
&= \mathbf{u}_0^{(j)} + \int_0^{T} \mathbf{J}_\mathbf{x} \mathbf{u}^{(i)}\left (\mathbf{x} ,\mathbf{u}^{(j)}; \Theta_2 \right)  \frac{d \mathbf{x}}{d s} \left (\mathbf{x},\mathbf{u}^{(j)}; \mathbf{u}_0^{(j)}; \Theta_2 \right ) \mathrm{d}s 
\label{eq:full_writeup}
\end{align}
Finally, $\mathbf{u}^{(j)}(\mathbf{x}(T))$ is passed through another neural network, $\Phi(\mathbf{u}^{(j)}(\mathbf{x}(T));\Theta_3)$ with input $\mathbf{u}^{(j)}(\mathbf{x}(T))$ and parameters $\Theta_3$ whose output are the probabilities of each class labels for image $\mathbf{z}_j$. The entire learning is now is reduced to finding optimal weights $(\Theta_1, \Theta_2, \Theta_3)$ which can be achieved by minimizing the loss 
\begin{align*}
    \mathcal{L}=\sum_{j=1}^N L(\Phi(\mathbf{u}^{(j)}(\mathbf{x}(T));\Theta_3),\mathbf{y}_j),
\end{align*}

where $L(\cdot)$ is a loss function of choice. 
In Algorithm~\ref{alg:training}, we illustrate the implementation procedure with the forward Euler method for simplicity for the framework but note any ODE solver can be used. 

\subsection{Combining MoC with Existing NODE Modifications}
As mentioned in the Section~\ref{sec:related}, the proposed C-NODEs method can be used as an extension to existing NODE frameworks. 
In all NODE modifications, the underlying expression of $\int_a^b \mathbf{f}(t, \mathbf{u}; \Theta) \mathrm{d} t$ remains the same.
Modifying this expression to $\int_a^b \mathbf{J}_{\mathbf{x}} \mathbf{u}(\mathbf{x},\mathbf{u}; \Theta) d \mathbf{x}/ds(\mathbf{x},\mathbf{u};\mathbf{u}_0; \Theta)  \mathrm{d} s$ results in the proposed C-NODE architecture, with the size of $\mathbf{x}$ being a hyperparameter.

\begin{algorithm}[tb]
    \caption{C-NODE algorithm using the forward Euler method}
    \label{alg:training}
\begin{algorithmic}
    \FOR{each input data $\mathbf{z}_j$}
    \STATE extract image feature $\mathbf{u}(s=0)=\mathbf{g}(\mathbf{z}_j;\Theta_1)$ with a feature extractor neural network.
    \STATE {\bfseries procedure} Integration along $s=0\rightarrow 1$
    \FOR {each time step $s_m$}
    \STATE calculate $\frac{d \mathbf{x}}{d s}(\mathbf{x},\mathbf{u};\mathbf{g}(\mathbf{z}_j;\Theta_1);\Theta_2)$ and $\mathbf{J}_\mathbf{x} \mathbf{u}(\mathbf{x},\mathbf{u};\Theta_2)$.
    \STATE calculate $\frac{d\mathbf{u}}{ds}=\mathbf{J}_\mathbf{x} \mathbf{u} \,\frac{d \mathbf{x}}{d s}$.
    \STATE calculate $\mathbf{u}(s_{m+1})=\mathbf{u}(s_m)+\frac{d\mathbf{u}}{ds}(s_{m+1}-s_m)$.
    \ENDFOR
    \STATE {\bfseries end procedure}
    \STATE classify $\mathbf{u}(s=1)$ with neural network $\Phi(\mathbf{u}(\mathbf{x}(s=1)),\Theta_3)$.
    \ENDFOR
\end{algorithmic}
\end{algorithm}

\section{Properties of C-NODEs}
C-NODE has a number of theoretical properties that contribute to its expressiveness.
We provide some theoretical results on these properties in the proceeding sections.
We also define continuous normalizing flows (CNFs) with C-NODEs, extending the CNFs originally defined with NODEs.
\subsection{Intersecting trajectories}

As mentioned in~\citep{dupont2019augmented}, one limitation of NODE is that the mappings cannot represent intersecting dynamics. 
We prove by construction that the C-NODEs can represent some dynamical systems with intersecting trajectories in the following proposition: 

\begin{proposition}
\label{prop:intersect}
The C-NODE can represent a dynamical system on $u(s)$, $du/ds=\mathcal{G}(s,u): \mathbb{R}_+ \times \mathbb{R} \rightarrow \mathbb{R}$, where when $u(0)=1$, then $u(1 )=u(0)+\int_0^1\mathcal{G}(s,u)ds=0$; and when $u(0)=0$, then $u(1)=u(0)+\int_0^1\mathcal{G}(s,u)ds=1$. 
\end{proposition}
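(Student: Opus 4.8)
The plan is to exhibit an explicit C-NODE — that is, an explicit choice of the characteristic coefficient $dx/ds$, the boundary/feature map, and the solution surface $u(x,t)$ of a first-order quasi-linear PDE — whose induced dynamics on $u(s)$ realize the two prescribed input-output pairs $(u(0),u(1)) \in \{(1,0),(0,1)\}$. Since a NODE cannot do this (the map $u(0)\mapsto u(1)$ must be monotone/injective by uniqueness of ODE solutions, and here it would have to send $1\mapsto 0$ and $0\mapsto 1$ while also, by continuity, fixing some point in between, contradicting injectivity), exhibiting one C-NODE that does is exactly what the proposition asks. So the proof is purely constructive; there is nothing to optimize.

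First I would recall from Section~\ref{sec:neural_chara} that a C-NODE integrates $\mathrm{d}u/\mathrm{d}s = [\mathbf{J}_{\mathbf x}\mathbf u]\, (\mathrm{d}\mathbf x/\mathrm{d}s)$ along a characteristic $\mathbf x(s)$, where the characteristic velocity may be conditioned on the data point $\mathbf z$ through $\mathbf g(\mathbf z)$ (Equation~\eqref{eq:condition}). I would use a genuinely two-dimensional base space: take $u(x,t)$ to be a fixed smooth function of $(x,t)$ — for concreteness something like $u(x,t) = x\cos(\pi t) $ restricted near $t\in[0,1]$, or any smooth surface for which the slice at $t=0$ and the slice at $t=1$ can both contain the needed values — and let the characteristic be the straight line $t(s) = s$, $x(s) = x_0(\mathbf z)$ constant, with $x_0$ chosen by the feature extractor according to the input. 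The key point is that the *value* $u(0)=u(x_0,0)$ and the *value* $u(1)=u(x_0,1)$ are then both functions of the single parameter $x_0$, and because the surface is not monotone in the $t$-direction uniformly in $x$, the pair $(u(x_0,0),u(x_0,1))$ can take the value $(1,0)$ for one choice of $x_0$ and $(0,1)$ for another. I would then solve for which $x_0$ gives each pair, verify the PDE that this $u$ satisfies (it is automatically a solution of the transport-type PDE $u_t + a(x,t,u)u_x = c(x,t,u)$ with $a\equiv x_0'(\cdot)$ along the relevant characteristic and $c = u_t$, matching the quasi-linear form~\eqref{eq:qlpde}), and confirm that $dx/ds$, $\mathbf J_{\mathbf x}\mathbf u$, and $\mathbf g$ are all representable by neural networks (they are, being affine or simple smooth functions).

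The main obstacle — really the only subtle point — is making sure the construction genuinely lives inside the C-NODE framework rather than smuggling in the extra degrees of freedom illegitimately: the whole content of the separation from NODE is that the *characteristic*, not just the value, is allowed to depend on the input, and I must be careful that the dependence I use is exactly the conditioning permitted by~\eqref{eq:condition} and no more. Concretely I should make the characteristic's $x$-coordinate (equivalently the boundary condition $x_0$) the input-dependent quantity, keep the PDE itself fixed across inputs, and check that the two required trajectories, viewed in the joint $(x,t)$-plane, do not coincide as curves even though their projected $u$-trajectories cross — that is what circumvents the intersecting-trajectories obstruction of \citet{dupont2019augmented}. Once the surface $u(x,t)$ is pinned down I would just display the two characteristics explicitly, read off $u(0)$ and $u(1)$ on each, and note that $du/ds$ along them is a well-defined function $\mathcal G(s,u)$ of the stated signature (single-valued because the two characteristics occupy disjoint sets of $x$, so $\mathcal G$ is a legitimate function on the image of the flow).
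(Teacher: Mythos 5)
Your plan is the same constructive strategy the paper uses: write down an explicit first-order PDE together with input-conditioned characteristics so that the induced endpoint map sends $1\mapsto 0$ and $0\mapsto 1$; the paper does this by setting $dx/ds=1$, $dt/ds=u_0$, $\partial u/\partial x=1$, $\partial u/\partial t=-2$, which yields $du/ds=1-2u_0$, while you move the input-dependence into the starting point $x_0$ of a vertical characteristic over a fixed surface. Two things need repair before your version closes. First, the one concrete surface you name, $u(x,t)=x\cos(\pi t)$, does not realize the required pairs (it gives $(1,-1)$ and $(0,0)$); you need something like $u(x,t)=x+t-2xt$ with $x_0\in\{0,1\}$, along whose vertical characteristics $du/ds=\partial u/\partial t=1-2x_0=1-2u_0$, recovering exactly the paper's ODE. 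Second, your closing parenthetical --- that $\mathcal{G}$ is single-valued as a function of $(s,u)$ because the two characteristics occupy disjoint sets of $x$ --- is not correct: the two trajectories $u(s)=1-s$ and $u(s)=s$ meet at $(s,u)=(1/2,1/2)$ with slopes $-1$ and $+1$, so no single-valued $\mathcal{G}(s,u)$ exists there; that is precisely the obstruction of \citet{dupont2019augmented}, and the resolution (in both your construction and the paper's) is that the dynamics are well-defined on the augmented $(x,t,u)$ space, not on $(s,u)$. Relatedly, your aside that the NODE impossibility follows because a fixed point ``contradicts injectivity'' is off --- the correct statement is that a one-dimensional ODE flow map is order-preserving, which $1\mapsto 0$, $0\mapsto 1$ violates --- but this remark is not needed for the proposition, which only asks for representability by a C-NODE.
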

\begin{proof}
See Appendix~\ref{proof:intersect}.
\end{proof}

\subsection{Density estimation with C-NODEs}
C-NODEs can also be used to define a continuous density flow that models the density of a variable over space subject to the variable satisfying a PDE. 
Similar to the change of log probability of NODEs, as in~\eqref{eqn:odeflow}, we provide the following proposition for C-NODEs:
\begin{proposition}
\label{prop:cnodelogprob}
Let $u(s)$ be a finite continuous random variable with probability density function $p(u(s))$ and let $u(s)$ satisfy $\frac{du(s)}{ds}=\sum_{i=1}^k\frac{\partial u}{\partial x_i}\frac{dx_i}{ds}$. Assuming $\frac{\partial u}{\partial x_i}$ and $\frac{dx_i}{ds}$ are uniformly Lipschitz continuous in $u$ and continuous in $s$, then the evolution of the log probability of $u$ follows:
\begin{align*}
    \frac{\partial \log p(u(s))}{\partial s}=-\mathrm{tr}\left(\frac{\partial }{\partial u}\sum_{i=1}^k\frac{\partial u}{\partial x_i}\frac{dx_i}{ds}\right)
\end{align*}
\end{proposition}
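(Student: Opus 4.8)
The plan is to reduce this to the standard NODE change-of-log-probability formula (equation~\eqref{eqn:odeflow}) by recognizing that, along a fixed characteristic, $u(s)$ satisfies an ordinary differential equation $du/ds = \mathcal{F}(s,u)$ where $\mathcal{F}(s,u) := \sum_{i=1}^k \frac{\partial u}{\partial x_i}\frac{dx_i}{ds}$ is exactly the right-hand side appearing in the statement. Once this identification is made, the hypotheses — $\frac{\partial u}{\partial x_i}$ and $\frac{dx_i}{ds}$ uniformly Lipschitz in $u$ and continuous in $s$ — give that $\mathcal{F}$ is uniformly Lipschitz in $u$ and continuous in $s$, which is precisely the regularity condition under which the instantaneous change of variables theorem of \citet[Appendix A]{chen2019neural} applies. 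Applying that theorem with vector field $\mathcal{F}$ yields $\frac{\partial \log p(u(s))}{\partial s} = -\tr\!\left(\frac{\partial \mathcal{F}}{\partial u}\right)$, which is the claimed identity.

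The steps, in order, would be: (i) fix an initial condition / characteristic and write the evolution of $u$ purely as a function of $s$, namely $\frac{du}{ds} = \mathcal{F}(s,u)$ with $\mathcal{F}$ as above; (ii) verify the regularity hypotheses on $\mathcal{F}$: a finite sum of products of functions that are uniformly Lipschitz in $u$ and continuous in $s$ is again uniformly Lipschitz in $u$ (on the relevant bounded region, using that Lipschitz functions on bounded sets are bounded, so products stay Lipschitz) and continuous in $s$; (iii) invoke the NODE instantaneous change-of-variables result, which states that for $du/ds = \mathcal{F}(s,u)$ with such regularity, $\frac{\partial \log p(u(s))}{\partial s} = -\tr\!\left(\frac{\partial \mathcal{F}}{\partial u}\right)$; (iv) substitute back $\mathcal{F} = \sum_{i=1}^k \frac{\partial u}{\partial x_i}\frac{dx_i}{ds}$ and pull the trace/derivative through the finite sum to recover the stated form. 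Alternatively, if a self-contained argument is preferred, one can reprove (iii) directly: expand $\log p(u(s+\varepsilon))$ via the transport/continuity equation, use that the flow map over $[s,s+\varepsilon]$ has Jacobian $I + \varepsilon\,\frac{\partial \mathcal{F}}{\partial u} + o(\varepsilon)$, take $\log\det = \varepsilon\,\tr\frac{\partial \mathcal{F}}{\partial u} + o(\varepsilon)$, and pass to the limit.

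The main obstacle is ensuring the regularity bookkeeping in step (ii) is airtight: "uniformly Lipschitz in $u$" for each factor must be combined correctly, since a product of globally Lipschitz functions need not be globally Lipschitz unless the factors are also bounded. Since $u(s)$ is stated to be a \emph{finite} continuous random variable, its trajectory stays in a bounded set, so all quantities can be taken bounded there and the product of (bounded, Lipschitz) functions is again Lipschitz — but this needs to be said explicitly. The only other subtlety is notational: $\frac{\partial u}{\partial x_i}$ is itself a function of $u$ (it is the learned Jacobian block $\mathbf{J}_{\mathbf{x}}\mathbf{u}$), so the inner derivative $\frac{\partial}{\partial u}$ in the conclusion differentiates through that dependence as well; this is automatic once we treat the whole right-hand side as the single vector field $\mathcal{F}(s,u)$, and I would emphasize this to avoid any confusion that $\frac{\partial u}{\partial x_i}$ is being treated as constant.
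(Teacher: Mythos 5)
Your proposal is correct and follows essentially the same route as the paper: the paper's proof is precisely the self-contained derivation you sketch as an alternative, i.e.\ it reruns the instantaneous change-of-variables argument of \citet[Appendix A]{chen2019neural} (discrete change of variables, Jacobi's formula for $\frac{d}{dt}\det$, Taylor expansion of the flow map $T_\epsilon$) and substitutes $\frac{du}{ds}=\sum_{i=1}^k\frac{\partial u}{\partial x_i}\frac{dx_i}{ds}$ at the final step. If anything, your regularity bookkeeping in step (ii) --- noting that a product of Lipschitz factors is Lipschitz only on the bounded region where the trajectory lives --- is more careful than what the paper records.
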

\begin{proof}
See Appendix~\ref{proof:cnodelogprob}.
\end{proof}
CNFs are continuous and invertible one-to-one mappings onto themselves, i.e., homeomorphisms. \citet{Zhang2020ApproximationCO} proved that vanilla NODEs are not universal estimators of homeomorphisms, and augmented neural ODEs (ANODEs) are universal estimators of homeomorphisms. We demonstrate that C-NODEs are universal estimators of homeomorphisms, which we formalize in the following proposition:
\begin{proposition}
\label{prop:chomeo}
Given any homeomorphism $h:\Upsilon\rightarrow\Upsilon$, $\Upsilon\subset\mathbb{R}^p$, initial condition $u_0$, and time $T>0$, there exists a flow $u(s,u_0)\in\mathbb{R}^n$ following $\frac{du}{ds}=\frac{\partial u}{\partial x}\frac{dx}{ds}+\frac{\partial u}{\partial t}\frac{dt}{ds}$ such that $u(T,u_0)=h(u_0)$.
\end{proposition}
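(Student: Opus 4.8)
The plan is to realize an arbitrary augmented neural ODE as a special case of a C-NODE and then invoke the universality result of \citet{Zhang2020ApproximationCO}. The observation to start from is that a C-NODE does not evolve $\mathbf u$ in isolation: the pair $\big(\mathbf u(s),\mathbf x(s)\big)$ obeys the \emph{autonomous} system $\tfrac{d\mathbf x}{ds}=\mathbf a(\mathbf x,\mathbf u)$, $\tfrac{d\mathbf u}{ds}=[\mathbf J_{\mathbf x}\mathbf u](\mathbf x,\mathbf u)\,\mathbf a(\mathbf x,\mathbf u)$, so the characteristic coordinates $\mathbf x$ (whose dimension is a free hyperparameter of the framework) act exactly as augmenting coordinates, and reserving one coordinate with velocity $\equiv 1$ supplies a clock. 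Hence the class of flows $s\mapsto\mathbf u(s)$ representable by C-NODEs contains the ANODE flows on $\mathbb R^p$ whose augmentation lives in $\mathbf x$-space, and since \citet{Zhang2020ApproximationCO} show ANODEs are universal estimators of homeomorphisms, the proposition follows once this inclusion is made precise.

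Concretely: given $h:\Upsilon\to\Upsilon$ with $\Upsilon\subset\mathbb R^p$, apply \citet{Zhang2020ApproximationCO} to obtain an augmentation dimension $q$ and a vector field $\mathbf G=(\mathbf G^{u},\mathbf G^{v}):\mathbb R^{p}\times\mathbb R^{q}\times[0,T]\to\mathbb R^{p}\times\mathbb R^{q}$, uniformly Lipschitz in $(\mathbf u,\mathbf v)$ and continuous in $s$, whose flow from $(u_0,\mathbf 0)$ has first block equal to $h(u_0)$ at $s=T$. Take $\mathbf x=(x_0,\mathbf v)\in\mathbb R^{1+q}$ with $x_0$ the clock: set $\tfrac{dx_0}{ds}=1$, $\tfrac{d\mathbf v}{ds}=\mathbf G^{v}(\mathbf u,\mathbf v,x_0)$, $x_0(0)=0$, $\mathbf v(0)=\mathbf 0$, $\mathbf u(0)=u_0$, and choose the Jacobian block so that $[\mathbf J_{\mathbf x}\mathbf u]\,\tfrac{d\mathbf x}{ds}=\mathbf G^{u}(\mathbf u,\mathbf v,x_0)$; since $\tfrac{d\mathbf x}{ds}$ never vanishes (its $x_0$-component is $1$), the rank-one choice $[\mathbf J_{\mathbf x}\mathbf u]=\mathbf G^{u}\big(\tfrac{d\mathbf x}{ds}\big)^{\!\top}\big/\big\|\tfrac{d\mathbf x}{ds}\big\|^{2}$ works and is as regular as $\mathbf G$. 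Then $\big(\mathbf u(s),\mathbf x(s)\big)$ satisfies the same ODE as Zhang's augmented flow after identifying $x_0$ with time, so $\mathbf u(T,u_0)=h(u_0)$, and the ingredients are uniformly Lipschitz in $\mathbf u$ and continuous in $s$, so the C-NODE is well posed in the sense of \eqref{eq:qlpde}. (Alternatively, keeping the explicit $\tfrac{\partial u}{\partial t}\tfrac{dt}{ds}$ term in the stated ODE, one may put $\tfrac{\partial u}{\partial x}\equiv 0$ and set $\tfrac{\partial u}{\partial t}=\mathbf G^{u}$ and $\tfrac{d\mathbf x}{ds}=\mathbf G^{v}$, avoiding the factorization entirely.)

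The main obstacle is exactly this product/factorization structure: a C-NODE increment of $\mathbf u$ must be expressed as Jacobian-times-characteristic-velocity, with both factors depending only on the current $(\mathbf x,\mathbf u)$ (and $\mathbf u_0$) and not on $s$ directly; the clock coordinate resolves both issues at once, furnishing the time dependence while keeping $\tfrac{d\mathbf x}{ds}$ bounded away from $0$ so the rank-one factorization is globally defined and Lipschitz. The remaining work is routine bookkeeping: importing Zhang et al.'s augmentation dimension together with their exact (or, depending on the regularity of $h$, arbitrarily accurate) terminal-value and Lipschitz guarantees, and noting that fixing the characteristic initial data $\mathbf x(0)=(0,\mathbf 0)$ is legitimate, since boundary data for the characteristics may be chosen freely.
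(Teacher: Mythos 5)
Your proof is correct in substance but takes a genuinely different route from the paper's. The paper's proof is a short explicit construction that leans on the conditioning-on-data mechanism of Section 3.3: it sets $\frac{dx}{ds}=1$, $\frac{dt}{ds}=u_0$, $\frac{\partial u}{\partial x}=h(u_0)$, $\frac{\partial u}{\partial t}=-1$, so that $\frac{du}{ds}=h(u_0)-u_0$ is constant along each trajectory and integrating over $[0,1]$ gives $u(1,u_0)=h(u_0)$ exactly (the inverse map is obtained by integrating backwards). No augmentation and no external universality theorem are needed, but the price is that the vector field depends explicitly on $u_0$, so one really obtains a family of straight-line flows indexed by the initial condition rather than a single dynamical system. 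Your argument instead builds one autonomous system on the pair $(\mathbf u,\mathbf x)$: the characteristic coordinates serve as ANODE-style augmentation plus a clock, the rank-one factorization $[\mathbf J_{\mathbf x}\mathbf u]=\mathbf G^{u}\,(d\mathbf x/ds)^{\top}/\|d\mathbf x/ds\|^{2}$ is valid and well conditioned because the clock keeps $\|d\mathbf x/ds\|\geq 1$, and the terminal-value guarantee is imported from \citet{Zhang2020ApproximationCO}. This buys a structurally stronger conclusion (a single vector field, not one per initial condition) at the cost of (i) needing $1+q$ characteristic dimensions, so the two-term form $\frac{\partial u}{\partial x}\frac{dx}{ds}+\frac{\partial u}{\partial t}\frac{dt}{ds}$ in the proposition must be read as the general $k$-term C-NODE sum with the augmenting coordinates grouped into a block, and (ii) inheriting whatever exactness-versus-approximation caveat the cited ANODE theorem carries, whereas the paper's construction hits $h(u_0)$ exactly. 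Both arguments establish the proposition as stated.
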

\begin{proof}
See Appendix~\ref{proof:chomeo}.
\end{proof}

\section{Experiments}
We present experiments on image classification tasks on benchmark datasets, image generation tasks on benchmark datasets, PDE modeling, and time series prediction.
\subsection{Classification Experiments with Image Datasets}
We first conduct experiments for classification tasks on high-dimensional image datasets, including MNIST, CIFAR-10, and SVHN.
We provide results for C-NODE and also combine the framework with existing methods, including ANODEs~\citep{dupont2019augmented}, Input Layer NODEs (IL-NODEs)~\citep{massaroli2021dissecting}, and 2nd-Order NODEs~\citep{massaroli2021dissecting}. 

The results for the experiments with the adjoint method are reported in Table~\ref{tab:result} and in Figure \ref{fig:training_adjoint}. We investigate the performances of the models on classification accuracy and the number of function evaluations (NFE) taken in the adaptive numerical integration. NFE is a indicator of the model's computational complexity, and can also be interpreted as the network depth for the continuous NODE system \citep{chen2019neural}. Using a similar number of parameters, combining C-NODEs with different models consistently results in higher accuracies and mostly uses smaller numbers of NFEs, indicating a better parameter efficiency. The performance improvements can be observed, especially on CIFAR-10 and SVHN, where it seems the dynamics to be learned are too complex for ODE systems, requiring a sophisticated model and a large number of NFEs. It appears that solving a PDE system along a multidimensional characteristic is beneficial for training more expressive functions with less complex dynamics, as can be seen in Figures \ref{fig:training_adjoint}, \ref{fig:training}.

\begin{figure*}[tb]
    \centering
    \begin{subfigure}[b]{0.49\textwidth}
        \centering
    \includegraphics[width=\textwidth, trim= 4cm 0cm 4cm 3.5cm,clip]{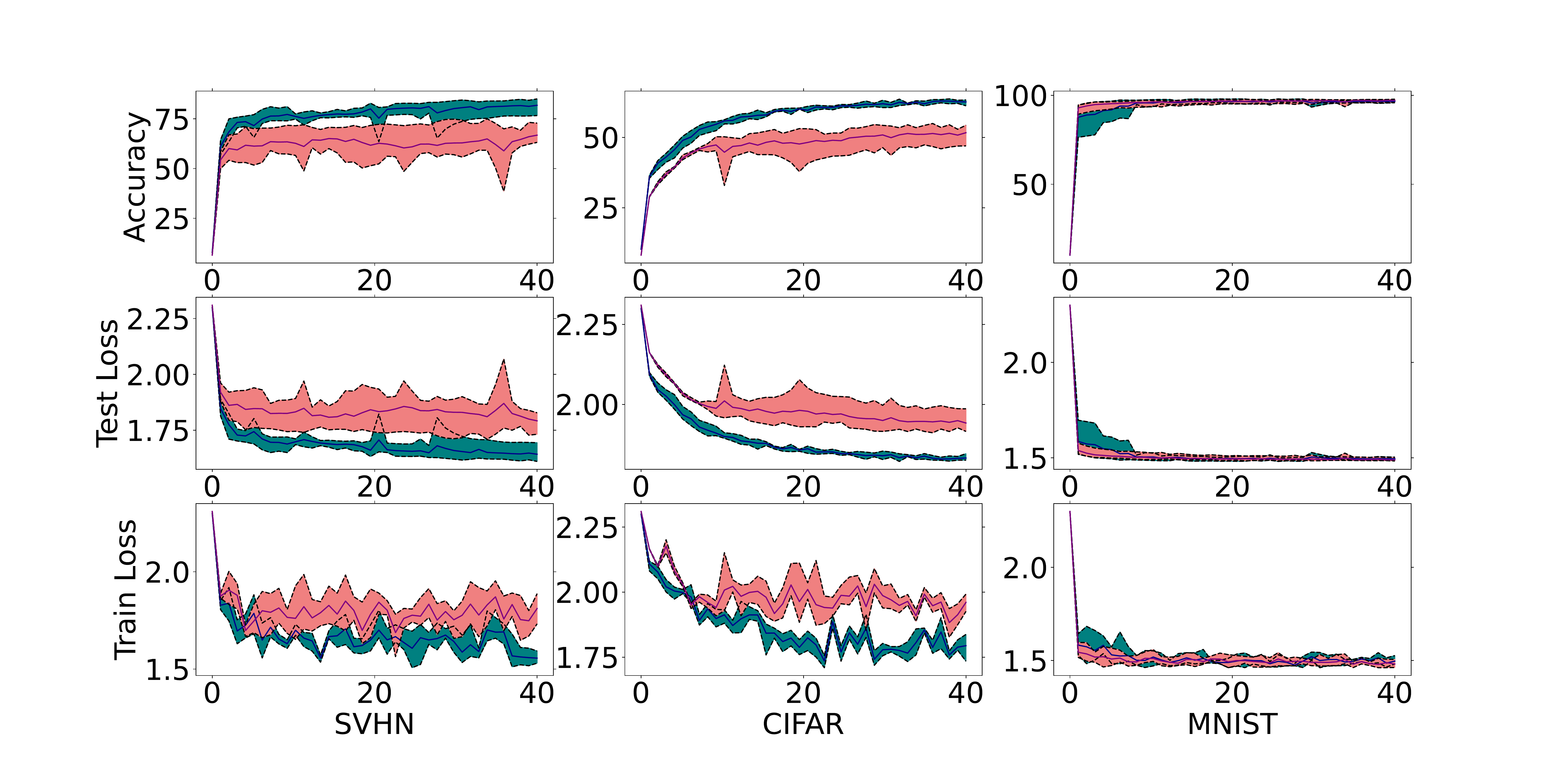}
    \caption{Adjoint training;}
    \label{fig:training_adjoint}
    \end{subfigure} 
    \begin{subfigure}[b]{0.49\textwidth}
        \centering
    \includegraphics[width=\textwidth, trim= 4cm 0cm 4cm 3.5cm,clip]{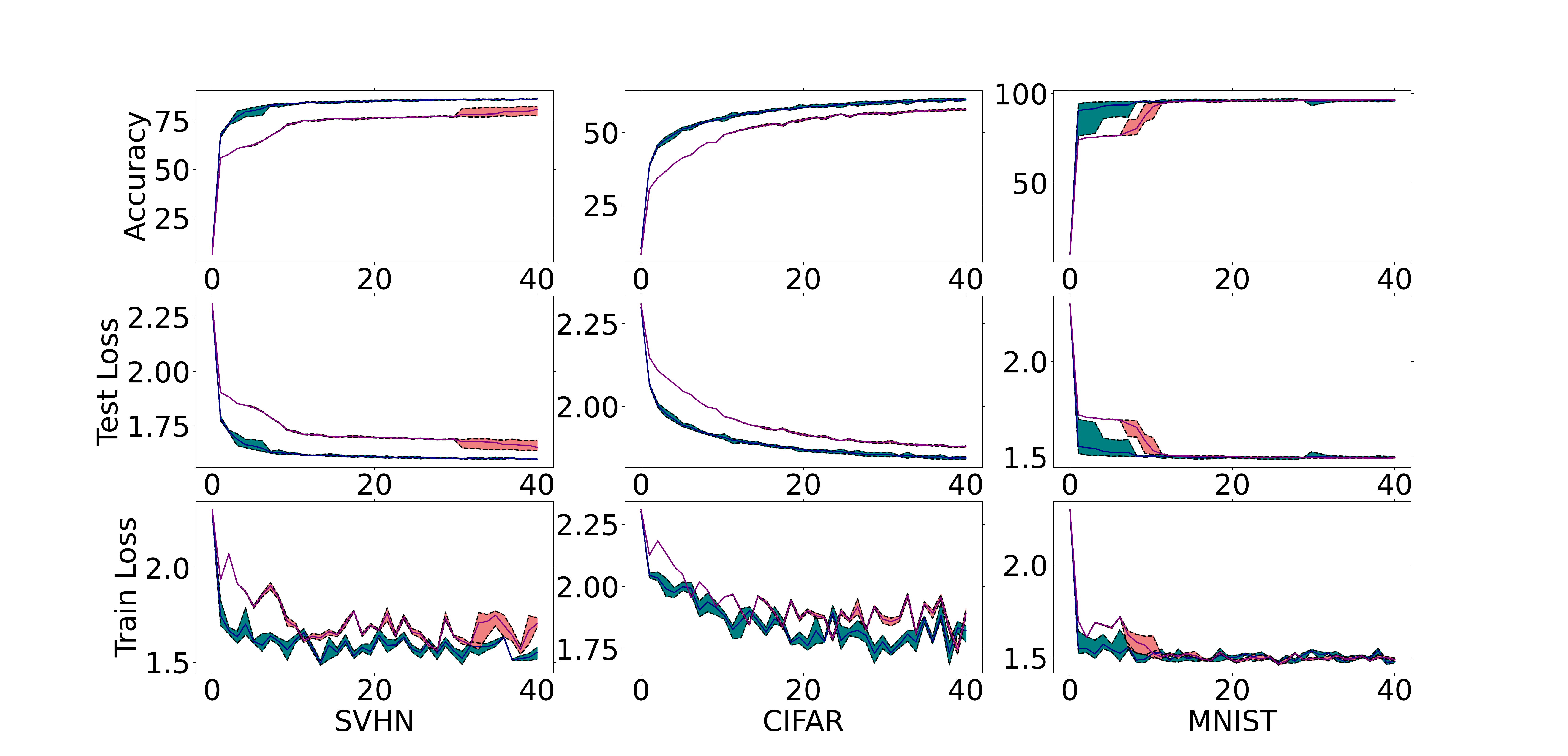}
    \caption{Backprop through Euler training;}
    \label{fig:training}
    \end{subfigure}
    \caption{{\bf Red: NODE. Blue: C-NODE.} Training dynamics of different datasets with adjoint in Fig.~\ref{fig:training_adjoint} and with Euler in Fig.~\ref{fig:training} averaged over five runs. The first column is the training process of SVHN, the second column is of CIFAR-10, and the third column is of MNIST. By incorporating the C-NODE method, we achieve a more stable training process in both CIFAR-10 and SVHN, while achieving higher accuracy. Full-sized figure in supplementary materials.}
    \label{fig:training_dynamics}
\end{figure*}



We also report training results using a traditional backpropagation through the forward Euler solver in Figure~\ref{fig:training}. 
The experiments are performed using the same network architectures as the previous experiments using the adjoint method. 
It appears that C-NODEs converge significantly faster than the NODEs (usually in one epoch) and generally have a more stable training process with smaller variance.
In experiments with MNIST, C-NODEs converge in only one epoch, while NODEs converge in roughly 15 epochs. 
This provides additional empirical evidence on the benefits of training using the characteristics. 
As shown in Figures \ref{fig:training_adjoint}, \ref{fig:training}, compared to training with the adjoint method, training with the forward Euler solver results in less variance, indicating a more stable training process. At the same time, training with the adjoint method results in more accurate models, as the adjoint method uses a constant amount of memory, and can employ more accurate adaptive ODE solvers.

\renewcommand\arraystretch{1.25}
\begin{table*}[ht!]
\centering
\footnotesize
\vspace{0.4cm}
\begin{tabular}{@{}lllll@{}}
\toprule
Dataset     &   Method      & Accuracy $\uparrow$ & NFE $\downarrow$       & Param.[K] $\downarrow$  \\ \hline
\multirow{8}{*}{SVHN} & NODE              & $75.28\pm 0.836\%$               &           131    & 115.444\\
& C-NODE             & $\mathbf{82.19\pm 0.478\%}$      & \bfseries 124    & 113.851   \\ \cline{2-5}
& ANODE             & $89.8\pm0.952\%$                 &           167    & 112.234   \\
& ANODE+C-NODE       & $\mathbf{92.23\pm0.176\%}$      & \bfseries 146    & 112.276   \\ \cline{2-5}
& 2nd-Ord           & $88.22\pm1.11 \%$               &           161    & 112.801   \\
& 2nd-Ord+C-NODE     & $\mathbf{92.37\pm0.118\%}$      & \bfseries 135    & 112.843   \\ \cline{2-5}
& IL-NODE           & $89.69\pm0.369\%$               &           195    & 113.368   \\
& IL-NODE+C-NODE     & $\mathbf{93.31\pm0.088\%}$      & \bfseries 95     & 113.752   \\ \hline
\multirow{8}{*}{ CIFAR-10} & NODE              & $56.30\pm 0.742\%$               &           152    & 115.444 \\
& C-NODE             & $\mathbf{64.28\pm 0.243\%}$     & \bfseries 151    & 113.851 \\ \cline{2-5}
& ANODE             & $70.99\pm 0.483\%$               & \bfseries 177    & 112.234 \\
& ANODE+C-NODE       & $\mathbf{71.36\pm0.220\%}$      &           224    & 112.276 \\ \cline{2-5}
& 2nd-Ord           & $70.84\pm0.360\%$               &           189    & 112.801 \\
& 2nd-Ord+C-NODE     & $\mathbf{73.68\pm0.153\%}$      & \bfseries 131    & 112.843 \\ \cline{2-5}
& IL-NODE           & $72.55\pm0.238\%$               &           134    & 113.368 \\
& IL-NODE+C-NODE     & $\mathbf{73.78\pm0.154\%}$      & \bfseries 85     & 113.752 \\ \hline
\multirow{8}{*}{ MNIST} & NODE              & $96.90\pm 0.154\%$              &           72     & 85.468       \\
& C-NODE             & $\mathbf{97.56\pm 0.431\%}$     &           72     & 83.041        \\ \cline{2-5}
& ANODE             & $99.12\pm 0.021\%$              &           68     & 89.408   \\
& ANODE+C-NODE       & $\mathbf{99.20\pm0.002\%}$      & \bfseries 60     & 88.321   \\ \cline{2-5}
& 2nd-Ord           & $99.35\pm0.002\%$               & \bfseries 52     & 89.552   \\
& 2nd-Ord+C-NODE     & $\mathbf{99.38\pm0.037\%}$      &           61     & 88.465   \\ \cline{2-5}
& IL-NODE           & $99.33\pm0.039\%$               & \bfseries 53     & 89.597   \\
& IL-NODE+C-NODE     & $\mathbf{99.33\pm0.001\%}$               &           60     & 88.51   \\ \bottomrule
\end{tabular}
\caption{Mean test results over 5 runs of different NODE models over SVHN, CIFAR-10, and MNIST. Accuracy and NFE at convergence are reported. Applying C-NODE always increases models' accuracy and usually reduces models' NFE as well as the standard error.}
\label{tab:result}
\end{table*}

\subsection{Ablation study on the dimensionality of C-NODE on classification tasks}
We perform an ablation study on the impact of the number of dimensions of the C-NODE we implement. This study allows us to evaluate the relationship between the model performance and the model's limit of mathematical approximating power. Empirical results show that as we increase the number of dimensions used in the C-NODE model, the C-NODE's performance first improves and then declines, due to overfitting. We have found out that information criteria like AIC and BIC can be successfully applied for dimensionality selection in this scenario. Details of the ablation study can be found in Appendix \ref{sec:ablation}. 

\subsection{Continuous normalizing flow with C-NODEs}
We compare the performance of CNFs defined with NODEs to with C-NODEs on MNIST, SVHN, and CIFAR-10. We use a Hutchinson trace estimator to calculate the trace and use multi-scale convolutional architectures as done in~\citet{Dinh2017DensityEU,grathwohl2019ffjord} \footnote[2]{This is based on the code that the authors of \citet{grathwohl2019ffjord} provided in \url{https://github.com/rtqichen/ffjord}}. 
Differential equations are solved using the adjoint method and a Runge-Kutta of order 5 of the Dormand-Prince-Shampine solver. 
Although the Euler forward method is faster, experimental results show that its fixed step size often leads to negative Bits/Dim, indicating the importance of adaptive solvers. 
As shown in table \ref{tab:cflow} and figure \ref{fig:mnist_cnf}, using a similar number of parameters, experimental results show that CNFs defined with C-NODEs perform better than CNFs defined with NODEs in terms of Bits/Dim, as well as having lower variance, and using a lower NFEs on all of MNIST, CIFAR-10, and SVHN.




\begin{figure}[hbt!]
    \centering
    \includegraphics[width=0.6\textwidth, trim= 0cm 0cm 0cm 4.5cm]{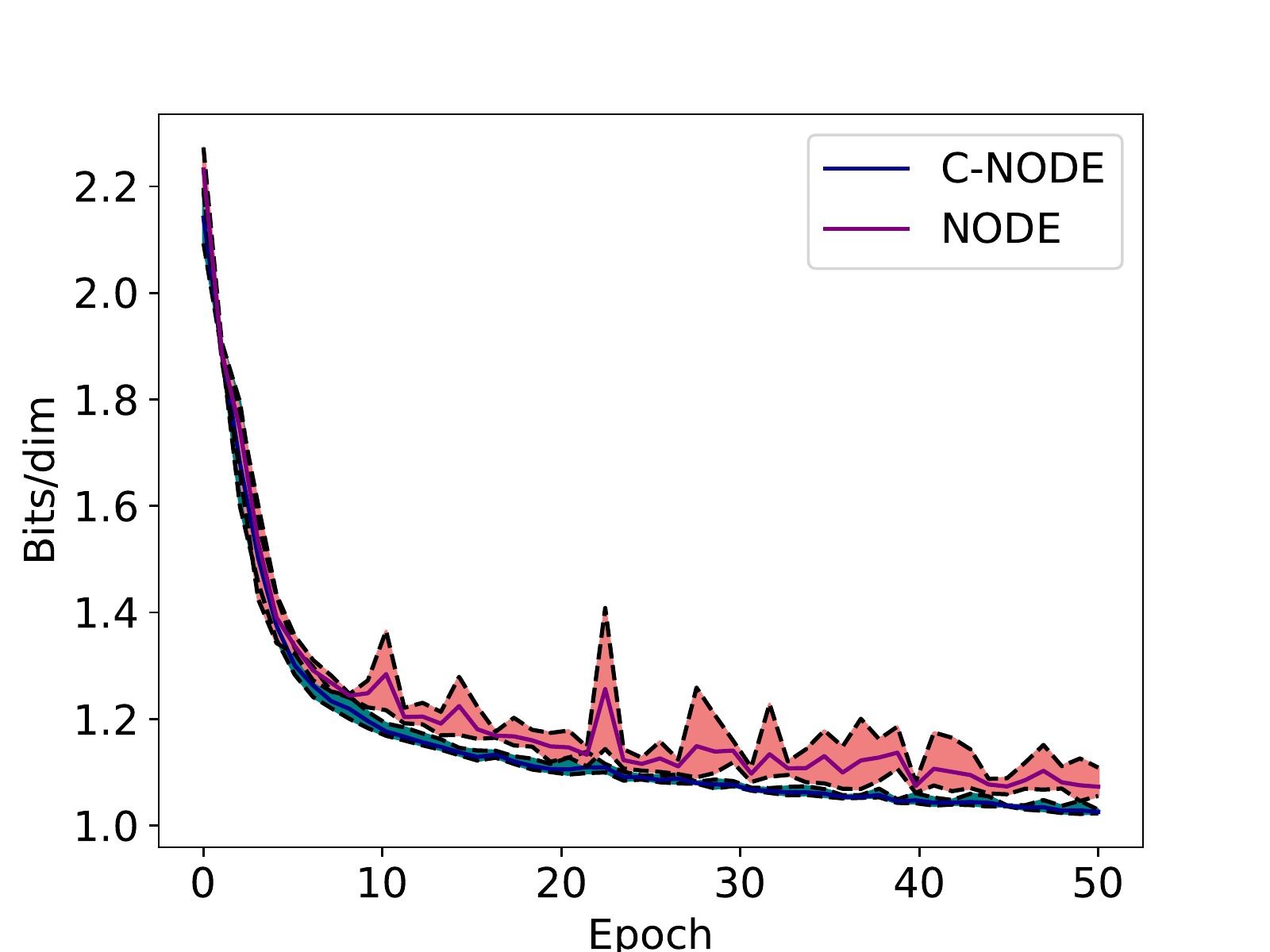}
    \caption{\textbf{Red: NODE. Blue: C-NODE.} Training dynamics of CNFs on MNIST dataset with adjoint method. We present Bits/dim of the first 50 training epochs. C-NODE method achieves higher accuracy, while having a lower variance.}
    \label{fig:mnist_cnf}
\end{figure}

\begin{table}
\centering
\begin{tabular}{l|lll|lll|lll}
\toprule
                                      & \multicolumn{3}{l|}{\textbf{MNIST}}          & \multicolumn{3}{l|}{\textbf{CIFAR-10}}                                                    & \multicolumn{3}{l}{\textbf{SVHN}} \\
\multirow{-2}{*}{Model}               & B/D    & Param.                   & NFE    & B/D                           & Param.                    & NFE                       & B/D       & Param.   & NFE      \\ \hline
Real NVP \citep{Dinh2017DensityEU}   & $1.05$ & {\color[HTML]{9B9B9B} N/A} & --     & {\color[HTML]{333333} $3.49$} & {\color[HTML]{9B9B9B} N/A} & {\color[HTML]{000000} --} & --        & --         & --       \\
Glow \citep{kingma2018glow}          & $1.06$ & {\color[HTML]{9B9B9B} N/A} & --     & $3.35$                        & $44.0$M                     & --                        & --        & --         & --       \\
RQ-NSF \citep{durkan2019neural}      & --     & --                         & --     & $3.38$                        & $11.8$M                     & --                        & --        & --         & --       \\
Res. Flow \citep{chen2019ResidualFF} & $0.97$ & $16.6$M                     & --     & $\mathbf{3.28}$                        & $25.2$M                     & --                        & --        & --         & --       \\
CP-Flow \citep{Huang2021ConvexPF}    & $1.02$ & $2.9$M                      & --     & $3.40$                        & $1.9$M                      & --                        & --        & --         & --       \\ \hline
NODE                                  & $1.00$ & $\mathbf{336.1}$K                      & $1350$ & $3.49$                        & $410.1$K                      & $1847$                    & $2.15$    & $410.1$K      & $1844$   \\
C-NODE                                & $\mathbf{0.95}$ & $338.0$K                      & $\mathbf{1323}$ & $3.44$                        & $\mathbf{406.0}$K                      & $\mathbf{1538}$                    & $\mathbf{2.12}$    & $\mathbf{406.0}$K      & $\mathbf{1352}$   \\ \bottomrule
\end{tabular}
\caption{Experimental results on generation tasks, with NODE, C-NODE, and other models. B/D indicates Bits/dim. Using a similar amount of parameters, C-NODE outperforms NODE on all three datasets, and have a significantly lower NFE when training for CIFAR-10 and SVHN.}
\label{tab:cflow}
\end{table}

\subsection{PDE modeling with C-NODEs}
We consider a synthetic regression example for a hyperbolic PDE with a known solution.
Since NODEs assume that the latent state is only dependent on a scalar (namely time), they cannot model dependencies that vary over multiple spatial variables required by most PDEs.
We quantify the differences in the representation capabilities by examining how well each method can represent a linear hyperbolic PDE. 
We also modify the assumptions used in the classification and density estimation experiments where the boundary conditions were constant as in~\eqref{eq:qlpde}.
We approximate the following BVP:
\begin{align}
\label{eq:bvp_synthetic}
    \begin{cases}
        u\frac{\partial u}{\partial x}+\frac{\partial u}{\partial t}=u,\\
        u(x,0)=2t,& 1\leq x\leq 2.
    \end{cases}
\end{align}
\eqref{eq:bvp_synthetic} has an analytical solution given by $u(x,t) = \frac{2x \exp(t)}{2 \exp(t) +1}$.
We generate a training dataset by randomly sampling 200 points $(x,t)$, $x\in[1,2]$, $t\in [0,1]$, as well as values $u(x,t)$ at those points. 
We test C-NODE and NODE on 200 points randomly sampled as $(x,t) \in [1,2] \times [0,1]$. 
For this experiment, C-NODE uses 809 parameters while NODE uses 1185 parameters. C-NODE deviates 8.05\% from the test dataset, while NODE deviates 30.52\%. 
Further experimental details can be found in Appendix~\ref{exp:pde}.

\subsection{Time series prediction with C-NODEs}
Finally, we test C-NODEs and NODEs on a synthetic time series prediction problem. 
We define a function by $u(x,t)=\frac{2x \exp(t)}{2\exp(t)+1}$, and we sample $\tilde{u} = u(x,t) + 0.1\epsilon_t$, where $\epsilon_t \sim \mathcal{N}(0, 1)$ over $x\in [1,2]$, $t\in[0,1]$ to generate the training dataset. 
We test the performance on $t \in [n,n+1]$ with $n\in\{2,\ldots,5\}$. 
To make the problem more challenging, $x$ values are omitted, and only $t$ values are provided during both training and testing. 
As shown in Table~\ref{tab:time_series}, C-NODE produces more profound improvements over NODEs as time increases.
\begin{table*}[hbt!]
\centering
\begin{tabular}{@{}lllllll@{}}
\toprule
Time   & {[}0,1{]} & {[}1,2{]} & {[}2,3{]} & {[}3,4{]} & {[}4,5{]} & {[}5,6{]} \\ \midrule
NODE   & $20.63\%$   & $25.00\%$   & $32.40\%$   & $45.91\%$   & $52.25\%$   & $70.01\%$   \\
C-NODE & $\mathbf{20.49\%}$   & $\mathbf{22.18\%}$   & $\mathbf{25.54\%}$   & $\mathbf{21.96\%}$   & $\mathbf{23.88\%}$   & $\mathbf{38.24\%}$   \\ \bottomrule
\end{tabular}
\caption{Time series prediction results for C-NODE and NODE. Errors are percentages of deviation from ground truth. As time goes, C-NODE outperforms NODE more.}
\label{tab:time_series}
\end{table*}
\vspace{-10pt}
\section{Discussion}
We describe an approach for extending NODEs to the case of PDEs by solving a series of ODEs along the characteristics of a PDE. 
The approach applies to any black-box ODE solver and can combine with existing NODE-based frameworks.
We empirically showcase its efficacy on classification tasks while also demonstrating its success in improving convergence using Euler forward method without the adjoint method. Additionally, C-NODE empirically achieves better performances on density estimation tasks, while being more efficient with the number of parameters and using lower NFEs.  C-NODE's efficiency over physical modeling and time series prediction is also highlighted with additional experiments.

\paragraph{Limitations} 
There are several limitations to the proposed method. 
The MoC only applies to hyperbolic PDEs, and we only consider first-order semi-linear PDEs in this paper. 
This may be a limitation since this is a specific class of PDEs that does not model all data. 
We additionally noted that, compared to ANODE, C-NODE's training is not as stable. 
This can be improved by coupling C-NODEs with ANODEs or other methods.

\section*{Acknowledgments}
This work was supported in part by the Office of Naval Research (ONR) under grant number N00014-21-1-2590.
AH was supported by NSF-GRFP.
\clearpage

\balance
\bibliography{example_paper}
\newpage
\appendix
\onecolumn




\section{Experimental Details}
\subsection{Experimental details of classification tasks}
\label{exp:cla}
We report the average performance over five independent training processes, and the models are trained for 100 epochs for all three datasets.

The input for 2nd-Ord, NODE, and C-NODE are the original images. In the IL-NODE, we transform the input to a latent space before the integration by the integral; that is, we raise the $\mathbb{R}^{c\times h\times w}$ dimensional input image into the $\mathbb{R}^{(c+p)\times h\times w}$ dimensional latent feature space\footnote[1]{This is based on the code that the authors of \citet{massaroli2021dissecting} provide in \url{https://github.com/DiffEqML/torchdyn}}.
We decode the result after performing the continuous transformations along characteristics curves, back to the $\mathbb{R}^{c\times h\times w}$ dimensional object space. 
Combining this with the C-NODE can be seen as solving a PDE on the latest features of the images rather than on the images directly. 
We solve first-order PDEs with three variables in CIFAR-10 and SVHN and solve first-order PDEs with two variables in MNIST. The number of parameters of the models is similar by adjusting the number of features used in the networks. We use similar training hyperparameters as \cite{massaroli2021dissecting}.

Unlike ODEs, we take derivatives with respect to different variables in PDEs. For a PDE with $k$ variables, this results in the constraint of the balance equations
\begin{align*}
    \frac{\partial^2 u}{\partial x_i x_j} = \frac{\partial^2 u}{\partial x_j x_i},\; i,\,j\in\{1,2,...,k\},
    i\neq j.
\end{align*}
This can be satisfied by defining the $k$-th derivative with a neural network, and integrate $k-1$ times to get the first order derivatives. Another way of satisfying the balance equation is to drop the dependency on the variables, i.e., $\forall i\in \{1,2,...,k\}$, 
\begin{align*}
    \frac{\partial u}{\partial x_i}=f_i(u;\theta).
\end{align*}
When we drop the dependency, all higher order derivatives are zero, and the balance equations are satisfied.

All experiments were performed on NVIDIA RTX 3090 GPUs on a cloud cluster.
\subsection{Experimental details of continuous normalizing flows}
\label{exp:cnf}
We report the average performance over four independent training processes. As shown in \cref{fig:flow_training}, compared to NODE, using a C-NODE structure improves the stability of training, as well as having a better performance. Specifically, the standard errors for C-NODEs on MNIST, SVHN, and CIFAR-10 are 0.37\%, 0.51\%, and 0.24\% respectively, and for NODEs the standard errors on MNIST, SVHN, and CIFAR-10 are 1.07\%, 0.32\%, and 0.22\% respectively. 

The experiments are developed using code adapted from the code that the authors of \cite{grathwohl2019ffjord} provided in \url{https://github.com/rtqichen/ffjord}. 

All experiments were performed on NVIDIA RTX 3090 GPUs on a cloud cluster.

\begin{figure*}
    \centering
    \includegraphics[width=0.9\textwidth,trim=0cm 0cm 0cm 0cm,clip]{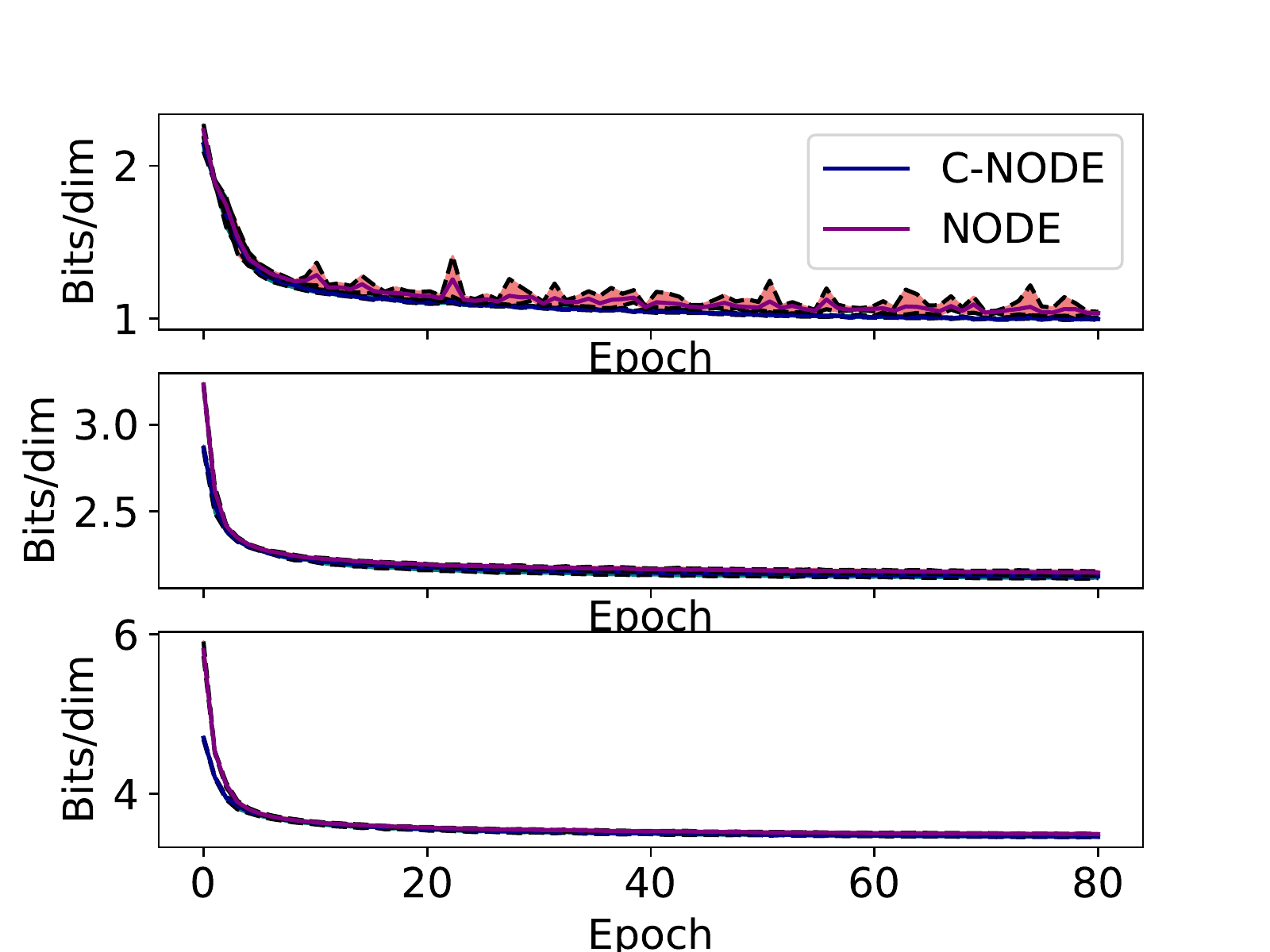}
    \caption{The training process averaged over 4 runs of C-NODE and NODE. The first row are the results on MNIST, the second row are the results on SVHN, the third row are the results on CIFAR-10.}
    \label{fig:flow_training}
\end{figure*}
\subsection{Experimental details of PDE modeling}
\label{exp:pde}
We want to solve the initial value problem
\[\begin{cases}
    u\frac{\partial u}{\partial x}+\frac{\partial u}{\partial t}=u,\\
    u(x,0)=2x, &1\leq x\leq 2,
\end{cases}\]
where the exact solution is $u(x,t)=\frac{2xe^t}{(2e^t+1)}$. Our dataset's input are 200 randomly sampled points $(x,t)$, $x\in[1,2]$, $t\in[0,1]$, and the dataset's outputare the exact solutions at those points.

For the C-NODE architecture, we define four networks: $NN_1(x,t)$ for $\frac{\partial u}{\partial x}$, $NN_2(x,t)$ for $\frac{\partial u}{\partial t}$, $NN_3(t)$ for the characteristic path $(x(s),t(s))$, $NN_4(x)$ for the initial condition. The result is calculated in four steps:
\begin{enumerate}
    \item Integrate $\Delta u = \int_0^t\frac{du(x(s),t(s))}{ds}ds=\int_0^t\frac{\partial u}{\partial t}\frac{dt}{ds}+\frac{\partial u}{\partial x}\frac{dx}{ds}ds=NN_2*NN_3[0]+NN_1*NN_3[1]ds$ as before.
    \item Given $x,\,t$, solve equation $\iota+NN_3(NN_4(\iota))[0]*t=x$ for $\iota$ iteratively, with $\iota_{n+1}=x-NN_3(NN_4(\iota_n))[0]*t$. $\iota_0$ is initialized to be $x$.
    \item Calculate initial value $u(x(0),t(0)) = NN_4(\iota)$.
    \item $u(x,t)=\Delta u+u(x(0),t(0))$.
\end{enumerate}

For the NODE architecture, we define one network: $NN_1(x,t)$ for $\frac{\partial u}{\partial t}$. The result is calculated as $u(x,t)=\int_0^t\frac{\partial u}{\partial t}dt=\int_0^tNN_1dt$.

All experiments were performed on NVIDIA RTX 3080 ti GPUs on a local machine.
\subsection{Experimental details of time series predictions}
\label{exp:time}
We want to predict $u(x,t)=\frac{2\cdot x\cdot e^t}{2\cdot e^t +1}$ at different time $t$, with $x\in[1,2]$, and $x$ being not accessible to the network. We also provide the network with the value of $u(1,0)$.

We use a 8 dimensional C-NODE network. The result is calculated with 
\begin{align*}
    u(x,t) = u(1,0)+\int_0^t\sum_{i=1}^8 \frac{\partial u}{\partial z_i}\frac{dz_i}{ds}ds.
\end{align*}

NODE is calculated with \begin{align*}
    u(x,t)=u(1,0)+\int_0^t\frac{\partial u}{\partial t}dt.
\end{align*} 

C-NODE uses 9744 parameters, and NODE uses 9697 parameters. 

All experiments were performed on NVIDIA RTX 3080 ti GPUs on a local machine.
\section{Approximation Capabilities of C-NODE}

\begin{proposition}[Method of Characteristics for Vector Valued PDEs]
\label{prop:moc}
Let $\mathbf{u}(x_1, \ldots, x_k) : \mathbb{R}^k \to \mathbb{R}^n$ be the solution of a first order semilinear PDE on a bounded domain $\Omega \subset \mathbb{R}^k$ of the form 
\begin{equation}
    \label{eq:semilin-pde}
    \sum_{i=1}^k a_i(x_1,\ldots,x_k,\mathbf{u})\frac{\partial \mathbf{u}}{\partial x_i}=\mathbf{c}(x_1,\ldots, x_k,\mathbf{u}) \quad \text{on} \:\: (x_1, \ldots, x_k) = \mathbf{x} \in \Omega.
\end{equation}
Additionally, let $\mathbf{a} = (a_1, \ldots, a_k)^T : \mathbb{R}^{k + n} \to \mathbb{R}^k, \mathbf{c} : \mathbb{R}^{k + n} \to \mathbb{R}^n$ be Lipschitz continuous functions. 
Define a system of ODEs as
\[
\begin{cases}
    \frac{d \mathbf{x}}{ds}(s) &=\mathbf{a}(\mathbf{x}(s), \mathbf{U}(s))\\
    \frac{d\mathbf{U}}{ds}(s)&=\mathbf{c}(\mathbf{x}(s),\mathbf{U}(s))\\
    \mathbf{x}(0)&\coloneqq \mathbf{x}_0,\,\mathbf{x}_0 \in\partial\Omega\\
    \mathbf{u}(\mathbf{x}_0)&\coloneqq\mathbf{u}_0\\
    \mathbf{U}(0)&\coloneqq\mathbf{u}_0
\end{cases}
\]
where $\mathbf{x}_0$ and $\mathbf{u}_0$ define the initial condition, $\partial \Omega$ is the boundary of the domain $\Omega$. Given initial conditions $\mathbf{x}_0,\mathbf{u}_0$, the solution of this system of ODEs $\mathbf{U}(s) : [a,b] \to \mathbb{R}^d$ is equal to the solution of the PDE in Equation \eqref{eq:semilin-pde} along the characteristic curve defined by $\mathbf{x}(s)$, i.e., $\mathbf{u}(\mathbf{x}(s))=\mathbf{U}(s)$.
The union of solutions $\mathbf{U}(s) \:\:$ for all $\:\: \mathbf{x}_0 \in \partial \Omega$ is equal to the solution of the original PDE in Equation \eqref{eq:semilin-pde} for all $\mathbf{x} \in \Omega$.
\end{proposition}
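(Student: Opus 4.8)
The plan is to run the classical method-of-characteristics verification componentwise and then close it with uniqueness for ODEs plus a covering argument for $\Omega$. Throughout I would assume that $\mathbf{u} \in C^1(\Omega;\mathbb{R}^n)$ is a solution of \eqref{eq:semilin-pde}; its existence under the stated continuity hypothesis on $\sum_i a_i \partial \mathbf{u}/\partial x_i$ is taken as given (cf.\ \citep{evans2010pde}).

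\textbf{Step 1 (characteristics reduce the PDE to the ODE system).} Fix a boundary point $\mathbf{x}_0 \in \partial\Omega$ and put $\mathbf{u}_0 := \mathbf{u}(\mathbf{x}_0)$. First I would solve the (decoupled) characteristic ODE $\mathrm{d}\mathbf{x}/\mathrm{d}s = \mathbf{a}(\mathbf{x}(s),\mathbf{u}(\mathbf{x}(s)))$ with $\mathbf{x}(0)=\mathbf{x}_0$; its right-hand side is Lipschitz in $\mathbf{x}$ because $\mathbf{a}$ is Lipschitz and $\mathbf{u}$ is $C^1$, so Picard--Lindel\"of yields a unique local trajectory $\mathbf{x}(s)$ staying in $\Omega$ for small $s$. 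Setting $\mathbf{U}(s) := \mathbf{u}(\mathbf{x}(s))$ and differentiating along the curve while invoking \eqref{eq:semilin-pde},
\[
\frac{\mathrm{d}\mathbf{U}}{\mathrm{d}s} = \sum_{i=1}^k \frac{\partial \mathbf{u}}{\partial x_i}\frac{\mathrm{d}x_i}{\mathrm{d}s} = \sum_{i=1}^k a_i(\mathbf{x},\mathbf{u})\frac{\partial \mathbf{u}}{\partial x_i} = \mathbf{c}(\mathbf{x}(s),\mathbf{U}(s)),
\]
so the pair $(\mathbf{x}(s),\mathbf{U}(s))$ satisfies the stated ODE system with the prescribed initial data $\mathbf{x}(0)=\mathbf{x}_0,\ \mathbf{U}(0)=\mathbf{u}_0$. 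Since the combined field $(\mathbf{a},\mathbf{c})$ is Lipschitz in $(\mathbf{x},\mathbf{U})$, Picard--Lindel\"of applies to the full system as well and its solution is unique; hence any solution of the ODE system with these initial conditions must equal $(\mathbf{x}(s),\mathbf{u}(\mathbf{x}(s)))$, i.e.\ $\mathbf{U}(s) = \mathbf{u}(\mathbf{x}(s))$, which is exactly the claimed identification along the characteristic curve.

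\textbf{Step 2 (the characteristics cover $\Omega$).} For the second assertion I would take an arbitrary $\mathbf{p}\in\Omega$ and integrate the characteristic field $\mathbf{a}(\cdot,\mathbf{u}(\cdot))$ \emph{backward} from $\mathbf{p}$. Because $\Omega$ is bounded and the boundary data is non-characteristic (the transversality condition implicit in well-posedness), this backward trajectory cannot remain in $\Omega$ indefinitely, so it reaches $\partial\Omega$ at some finite parameter value; after reparametrizing so that this exit point is $\mathbf{x}(0)=:\mathbf{x}_0\in\partial\Omega$, Step 1 shows the ODE solution issued from $\mathbf{x}_0$ passes through $\mathbf{p}$ and agrees with $\mathbf{u}(\mathbf{p})$ there. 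Letting $\mathbf{p}$ range over $\Omega$ shows that $\bigcup_{\mathbf{x}_0\in\partial\Omega}\{\mathbf{U}(s)\}$ reconstructs $\mathbf{u}$ on all of $\Omega$, completing the proof.

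\textbf{Main obstacle.} The vector-valued Step 1 is routine: it is the scalar computation carried out in each of the $n$ output coordinates, and the only subtlety is noting that the characteristic system is genuinely self-contained (the equation for $\mathbf{x}$ couples only to $\mathbf{U}$, not to spatial derivatives of $\mathbf{u}$). The genuinely delicate point is Step 2 --- ensuring that characteristics launched from $\partial\Omega$ foliate $\Omega$ without gaps and without the curves crossing in a way that would make $\mathbf{u}$ multivalued. This is precisely the non-characteristic/transversality requirement on the boundary surface from the classical theory, and it is the same hypothesis under which a $C^1$ solution of \eqref{eq:semilin-pde} exists; rather than reprove a local flow-box/straightening theorem I would state this as a standing assumption and cite \citep{evans2010pde,griffiths2015essential}.
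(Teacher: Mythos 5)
Your proof is correct, and it reaches the conclusion by a slightly different mechanism than the paper. The paper takes the solution $(\mathbf{x}(s),\mathbf{U}(s))$ of the coupled ODE system as given, forms the squared discrepancy $\Delta(s)=\|\mathbf{u}(\mathbf{x}(s))-\mathbf{U}(s)\|^2$, uses the PDE to rewrite $\Delta'$ as an inner product against $\mathbf{J}(\mathbf{u})[\mathbf{a}(\mathbf{U})-\mathbf{a}(\mathbf{u})]+[\mathbf{c}(\mathbf{U})-\mathbf{c}(\mathbf{u})]$, bounds this by $C\Delta$ via the Lipschitz hypotheses and boundedness of the Jacobian on a compact parameter interval, and closes with Gronwall's lemma. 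You instead \emph{construct} a solution of the coupled system --- solve the frozen characteristic equation $\mathrm{d}\mathbf{x}/\mathrm{d}s=\mathbf{a}(\mathbf{x},\mathbf{u}(\mathbf{x}))$, set $\mathbf{U}:=\mathbf{u}\circ\mathbf{x}$, verify via the chain rule and the PDE that the pair satisfies the system --- and then invoke Picard--Lindel\"of uniqueness. Since that uniqueness theorem is itself proved by Gronwall, the two arguments rest on the same engine and the same hypotheses; yours packages the estimate inside a cited theorem, while the paper's runs it by hand (and is therefore self-contained given its Lemma on Gronwall's inequality). One caveat on your construction: Lipschitzness of $s\mapsto\mathbf{a}(\mathbf{x},\mathbf{u}(\mathbf{x}))$ requires $\mathbf{u}$ to be Lipschitz, which $C^1$ gives you only locally/on compact subsets, so your Step 1 is genuinely local --- but the paper's argument is local in the same sense. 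Your Step 2 addresses the covering claim (that characteristics emanating from $\partial\Omega$ foliate $\Omega$), which the paper asserts in the proposition but does not actually prove; you are right that this needs a non-characteristic/transversality hypothesis that is absent from the statement, and flagging it as a standing assumption is the honest resolution.
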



\begin{lemma}[Gronwall's Lemma \citep{howard1998}]
\label{lemma:gronwall}
Let $U\subset \mathbb{R}^n$ be an open set. Let $\mathbf{f}:U\times [0,T]\rightarrow\mathbb{R}^n$ be a continuous function and let $\mathbf{h_1},\,\mathbf{h_2}:[0,T]\rightarrow U$ satisfy the initial value problems:
$$\frac{d\mathbf{h_1}(t)}{dt}=f(\mathbf{h_1}(t),t),\; \mathbf{h_1}(0)=\mathbf{x_1},$$
$$\frac{d\mathbf{h_2}(t)}{dt}=f(\mathbf{h_2}(t),t),\; \mathbf{h_2}(0)=\mathbf{x_2}.$$
If there exists non-negative constant $C$ such that for all $t\in[0,T]$
$$\|\mathbf{f}(\mathbf{h_2}(t),t)-\mathbf{f}(\mathbf{h_1}(t),t)\|\leq C\|\mathbf{h_2}(t)-\mathbf{h_1}(t)\|,$$
where $\|\cdot\|$ is the Euclidean norm. Then, for all $ t\in [0,T]$, 
$$\|\mathbf{h_2}(t)-\mathbf{h_1}(t)\|\leq e^{Ct}\|\mathbf{x_2}-\mathbf{x_1}\|.$$
\end{lemma}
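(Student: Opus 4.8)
The plan is to reduce the vector statement to a scalar integral inequality and then integrate via an integrating-factor argument. First I would pass to the integral form of the two initial value problems: by the fundamental theorem of calculus, $\mathbf{h_i}(t) = \mathbf{x_i} + \int_0^t \mathbf{f}(\mathbf{h_i}(\tau),\tau)\,d\tau$ for $i=1,2$. Subtracting these and taking Euclidean norms, the triangle inequality for integrals together with the hypothesized bound $\|\mathbf{f}(\mathbf{h_2}(\tau),\tau)-\mathbf{f}(\mathbf{h_1}(\tau),\tau)\| \le C\|\mathbf{h_2}(\tau)-\mathbf{h_1}(\tau)\|$ yields the scalar integral inequality
$$g(t) \le g(0) + C\int_0^t g(\tau)\,d\tau, \qquad g(t) := \|\mathbf{h_2}(t)-\mathbf{h_1}(t)\|, \quad g(0) = \|\mathbf{x_2}-\mathbf{x_1}\|.$$

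The second step is to apply the classical scalar Gronwall inequality to $g$. To keep the argument self-contained I would prove it by an integrating-factor trick: set $G(t) := g(0) + C\int_0^t g(\tau)\,d\tau$, so that $g(t)\le G(t)$, $G(0)=g(0)$, and $G'(t) = C g(t) \le C G(t)$. Multiplying by $e^{-Ct}$ gives $\frac{d}{dt}\big(e^{-Ct}G(t)\big) = e^{-Ct}\big(G'(t)-CG(t)\big) \le 0$, so $e^{-Ct}G(t)$ is nonincreasing and hence $e^{-Ct}G(t) \le G(0) = g(0)$. Therefore $g(t) \le G(t) \le g(0)e^{Ct}$, which is exactly the claimed bound $\|\mathbf{h_2}(t)-\mathbf{h_1}(t)\| \le e^{Ct}\|\mathbf{x_2}-\mathbf{x_1}\|$.

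An alternative I would consider, bypassing the integral form, is to differentiate the squared distance $\psi(t) := \|\mathbf{h_2}(t)-\mathbf{h_1}(t)\|^2$. Then $\psi'(t) = 2\langle \mathbf{h_2}(t)-\mathbf{h_1}(t),\, \mathbf{f}(\mathbf{h_2}(t),t)-\mathbf{f}(\mathbf{h_1}(t),t)\rangle$, and Cauchy--Schwarz with the hypothesis bounds this by $2C\psi(t)$; applying the integrating factor to $e^{-2Ct}\psi(t)$ gives $\psi(t)\le\psi(0)e^{2Ct}$, and taking square roots recovers the result. I prefer the squared-norm route because it sidesteps the nondifferentiability of $t\mapsto\|\mathbf{h_2}(t)-\mathbf{h_1}(t)\|$ at points where the two trajectories coincide.

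The main obstacle is precisely this regularity subtlety: the Euclidean norm is not differentiable at the origin, so one cannot naively differentiate $g(t)$ and assert $g'(t)\le Cg(t)$ pointwise. The integral-inequality formulation avoids differentiating the norm altogether, while the squared-norm formulation replaces it by the everywhere-differentiable $\psi$; either circumvents the issue cleanly. A minor remaining point to verify is that $g$ (equivalently $\psi$) is continuous, so that the integrals above are well defined — this follows immediately from the continuity of $\mathbf{h_1}$, $\mathbf{h_2}$, and $\mathbf{f}$.
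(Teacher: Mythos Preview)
Your proof is correct; both the integral-inequality route and the squared-norm route are standard, rigorous derivations of this form of Gronwall's inequality, and your handling of the nondifferentiability of the norm at coincidence points is appropriate.

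However, there is nothing to compare against: the paper does not supply its own proof of this lemma. It is stated with a citation to \citet{howard1998} and invoked as a black-box tool inside the proof of Proposition~\ref{prop:moc}. There, the authors bound a squared-difference quantity $\Delta(s)=\|\mathbf{u}(\mathbf{x}(s))-\mathbf{U}(s)\|^2$ by showing $\|\Delta'(s)\|\le C\|\Delta(s)\|$ and then appeal directly to the lemma. In that sense your squared-norm alternative is closest in spirit to how the paper actually \emph{uses} the result, but the lemma itself is simply cited, not proved.
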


\subsection{Proof of Proposition \ref{prop:moc}}
\label{proof:moc}
This proof is largely based on the proof for the univarate case provided at\footnote{\url{https://en.wikipedia.org/wiki/Method_of_characteristics##Proof_for_quasilinear_Case}}. 
We extend for the vector valued case. 
\begin{proof}
For PDE on $\mathbf{u}$ with $k$ input, and an $n$-dimensional output, we have $a_i : \mathbb{R}^{k + n} \to \mathbb{R}$, $\frac{\partial \mathbf{u}}{\partial x_i} \in \mathbb{R}^{n}$, and $\mathbf{c} : \mathbb{R}^{k + n} \to \mathbb{R}^{n}$. 
In proposition \ref{prop:moc}, we look at PDEs in the following form 
\begin{equation}
    \sum_{i=1}^k a_i(x_1,\ldots,x_k,\mathbf{u})\frac{\partial \mathbf{u}}{\partial x_i}=\mathbf{c}(x_1,\ldots, x_k,\mathbf{u}).
\end{equation}
Defining and substituting $\mathbf{x} = (x_1, \ldots, x_k)^\intercal$, $\mathbf{a} = (a_1, \ldots, a_k)^\intercal$, and Jacobian $\mathbf{J}(\mathbf{u}(\mathbf{x}))=(\frac{\partial \mathbf{u}}{\partial x_1},...,\frac{\partial \mathbf{u}}{\partial x_k}) \: \in \mathbb{R}^{n\times k}$ into Equation \eqref{eq:semilin-pde} result in  
\begin{equation}
\label{eq:jacob_PDE}
\mathbf{J} ( \mathbf{u} (\mathbf{x})) \mathbf{a}(\mathbf{x},\mathbf{u}) = \mathbf{c}(\mathbf{x}, \mathbf{u}).
\end{equation}
From proposition \ref{prop:moc}, the characteristic curves are given by 
$$
\frac{dx_i}{ds} = a_i(x_1, \ldots, x_k, \mathbf{u}),
$$
and the ODE system is given by
\begin{equation}
\label{eq:chara_ode}
\frac{d\mathbf{x}}{ds}(s)=\mathbf{a}(\mathbf{x}(s),\mathbf{U}(s)),
\end{equation}
\begin{equation}
\label{eq:u_ode}
\frac{d\mathbf{U}}{ds}(s)=\mathbf{c}(\mathbf{x}(s),\mathbf{U}(s)).
\end{equation}
Define the difference between the solution to \eqref{eq:u_ode} and the PDE in \eqref{eq:semilin-pde} as
$$\Delta(s)=\left\|\mathbf{u}(\mathbf{x}(s))-\mathbf{U}(s)\right\|^2=\left(\mathbf{u}(\mathbf{x}(s))-\mathbf{U}(s)\right)^\intercal\left(\mathbf{u}(\mathbf{x}(s))-\mathbf{U}(s)\right),$$ 
Differentiating $\Delta(s)$ with respect to $s$ and plugging in \eqref{eq:chara_ode}, we get
\begin{align}
 \nonumber   \Delta'(s)\coloneqq \frac{d \Delta(s)}{ds} &= 2(\mathbf{u}(\mathbf{x}(s)) -\mathbf{U}(s))\cdot(\mathbf{J}(\mathbf{u}) \mathbf{x}'(s) - \mathbf{U}'(s)) \\
    &= 2[\mathbf{u}(\mathbf{x}(s)) -\mathbf{U}(s)]\cdot[\mathbf{J}(\mathbf{u})\mathbf{a}(\mathbf{x}(s),\mathbf{U}(s)) - \mathbf{c}(\mathbf{x}(s), \mathbf{U}(s))].
    \label{eq:delta_prime}
\end{align}
\eqref{eq:jacob_PDE} gives us $\sum_{i=1}^k a_i(x_1,\ldots,x_k,\mathbf{u})\frac{\partial \mathbf{u}}{\partial x_i}-\mathbf{c}(x_1,\ldots, x_k,\mathbf{u})=0$. 
Plugging this equality into \eqref{eq:delta_prime} and rearrange terms, we have
\begin{align*}
    \Delta'(s) = 2[\mathbf{u}(\mathbf{x}(s)) -\mathbf{U}(s)] &\cdot\{[\mathbf{J}(\mathbf{u})\mathbf{a}(\mathbf{x}(s),\mathbf{U}(s)) - \mathbf{c}(\mathbf{x}(s), \mathbf{U}(s))]\\
    &-  [\mathbf{J}(\mathbf{u})\mathbf{a}(\mathbf{x}(s),\mathbf{u}(s)) - \mathbf{c}(\mathbf{x}(s), \mathbf{u}(s))]\}.
\end{align*}

Combining terms, we have
\begin{align*}
    \Delta'&=2(\mathbf{u}-\mathbf{U})\cdot\left ( \left [\mathbf{J}(\mathbf{u}) \mathbf{a}(\mathbf{U})-\mathbf{c}(\mathbf{U}) \right]-\left [\mathbf{J}(\mathbf{u}) \mathbf{a}(\mathbf{u})-\mathbf{c}(\mathbf{u})\right]\right) \\
    &=2(\mathbf{u}-\mathbf{U})\cdot\left (\mathbf{J}(\mathbf{u})\left[\mathbf{a}(\mathbf{U})-\mathbf{a}(\mathbf{u})\right]+\left[\mathbf{c}(\mathbf{U})-\mathbf{c}(\mathbf{u})\right]\right).
\end{align*}
Applying triangle inequality, we have
\begin{align*}
    \| \Delta' \| \leq 2\|\mathbf{u} - \mathbf{U} \|  ( \| \mathbf{J}(\mathbf{u})  \| \| \mathbf{a}(\mathbf{U}) - \mathbf{a}(\mathbf{u}) \|   + \|\mathbf{c}(\mathbf{U}) - \mathbf{c}(\mathbf{u}) \|  ).
\end{align*}
By the assumption in proposition \ref{prop:moc}, $\mathbf{a}$ and $\mathbf{c}$ are Lipschitz continuous. 
By Lipschitz continuity, we have $\|\mathbf{a}(\mathbf{U})-\mathbf{a}(\mathbf{u}))\|\leq A\|\mathbf{u}-\mathbf{U}\|$ and $\|\mathbf{c}(\mathbf{U})-\mathbf{c}(\mathbf{u}))\|\leq B\|\mathbf{u}-\mathbf{U}\|$, for some constants A and B in $\mathbb{R}_+$. Also, for compact set $[0,s_0]$, $s_0<\infty$, since both $\mathbf{u}$ and Jacobian $\mathbf{J}$ are continuous mapping, $\mathbf{J}(\mathbf{u})$ is also compact. Since a subspace of $\mathbb{R}^n$ is compact if and only it is closed and bounded, $\mathbf{J}(\mathbf{u})$ is bounded \citep{strichartz2000analysis}. Thus, $\|\mathbf{J}(\mathbf{u})\|\leq M$ for some constant $M$ in $\mathbb{R}_+$. 
Define $C=2(A M+B)$, we have
\begin{align*}
    \|\Delta'(s)\|&\leq 2(A M\|\mathbf{u}-\mathbf{U}\|+B\|\mathbf{u}-\mathbf{U}\|)\|\mathbf{u}-\mathbf{U}\| \\
    &=C\|\mathbf{u}-\mathbf{U}\|^2 \\
    &=C\|\Delta(s)\|.
\end{align*}
From proposition \ref{prop:moc}, we have $\mathbf{u}(\mathbf{x}(0))=\mathbf{U}(0)$. As proved above, we have 
$$\left \|\frac{d\mathbf{u}(\mathbf{x}(s))}{ds}-\frac{d\mathbf{U}(s)}{ds}\right\|\coloneqq\|\Delta'(s)\|\leq C\|\Delta(s)\|,$$
where $C< \infty$. 
Thus, by lemma \ref{lemma:gronwall}, we have 
$$\|\Delta(s)\|\leq e^{Ct}\|\Delta(0)\|=e^{Ct}\|\mathbf{u}(\mathbf{x}(0))-\mathbf{U}(0)\|=0.$$
This further implies that $\mathbf{U}(s)=\mathbf{u}(\mathbf{x}(s))$, so long as $\mathbf{a}$ and $\mathbf{c}$ are Lipschitz continuous.
\end{proof}

\subsection{Proof of Proposition \ref{prop:intersect}}
\label{proof:intersect}
\begin{proof}

Suppose have C-NODE given by
$$
\frac{\mathrm{d}u}{\mathrm{d}s} = \frac{\partial u}{\partial x}\frac{dx}{ds} + \frac{\partial u}{\partial t}\frac{dt}{ds}. 
$$
Write out specific functions for these terms to match the desired properties of the function.
Define initial condition $u(0,0)=u_0$. By setting
\begin{align*}
& \frac{dx}{ds}\left(s,u_0,\theta\right)=1, & &\frac{dt}{ds}\left(s,u_0,\theta\right)=u_0, \\
&\frac{\partial u}{\partial x}(u(x,t),\theta)=1, & &\frac{\partial u}{\partial t}(u(x,t),\theta)=-2,
\end{align*}
have the ODE and solution,
\begin{align*}
&\frac{\mathrm{d}u}{\mathrm{d}s} = 1 - 2u_0 \\
\implies& u(s;u_0) = \left ( 1 - 2u_0 \right )s \\
\implies& u\left(s; \begin{bmatrix}
0 \\
1
\end{bmatrix} \right) = \left ( 1 - 2\begin{bmatrix}
0 \\
1
\end{bmatrix} \right )s = \begin{bmatrix}
1 \\
-1
\end{bmatrix} s.
\end{align*}
To be specific, we can represent this system with the following family of PDEs:

\begin{align*}
\frac{\partial u}{\partial x}+u_0\frac{\partial u}{\partial t}=1-2u_0.
\end{align*}


\begin{figure}
    \centering
\includegraphics[width=0.45\textwidth,trim=20cm 4cm 18cm 5cm,clip]{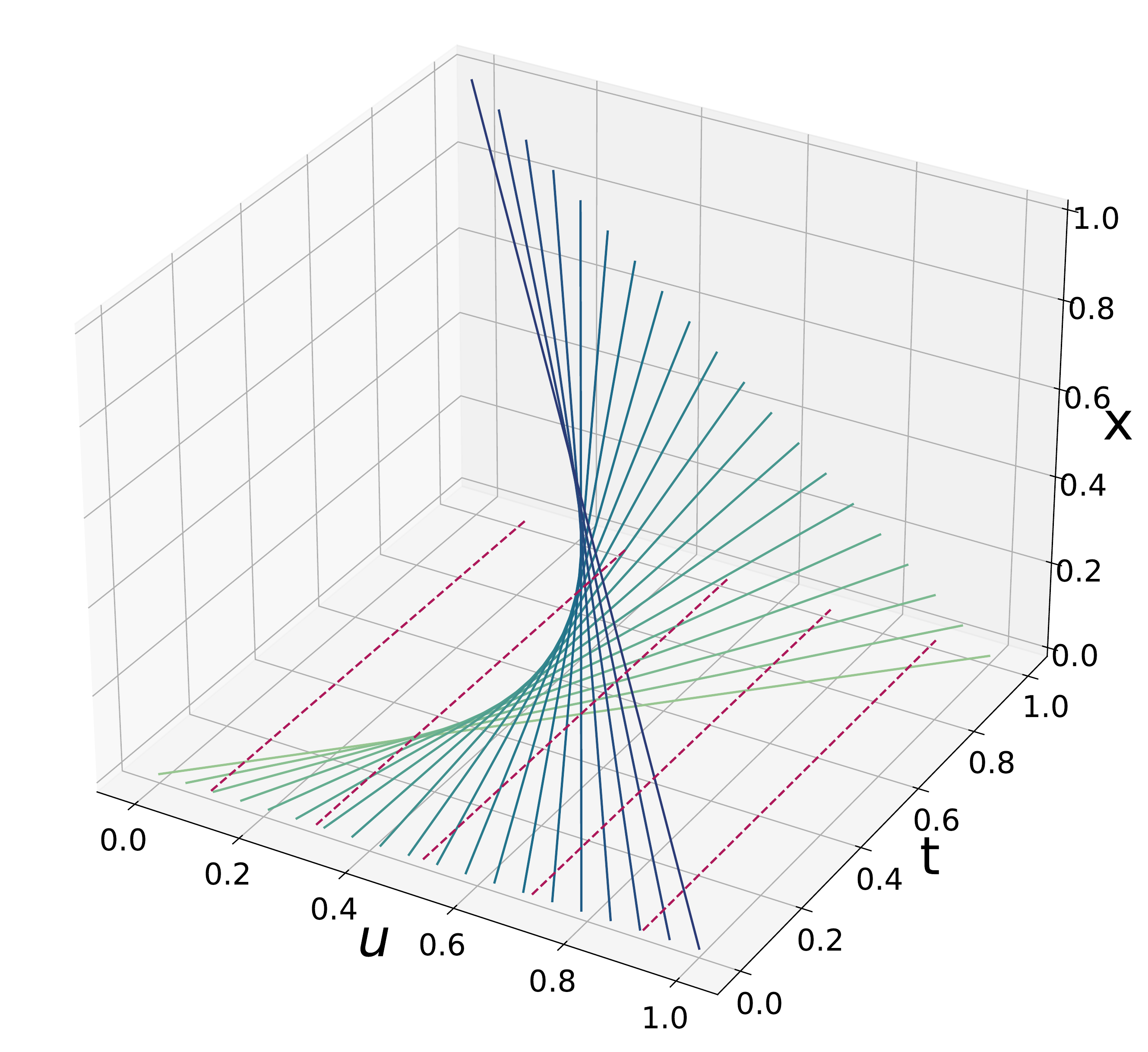}
    \caption{Comparison of C-NODEs and NODEs. C-NODEs (solid blue) learn a family of integration paths conditioned on the input value, avoiding intersecting dynamics. NODEs (dashed red) integrate along a 1D line that is not conditioned on the input value and can not represent functions requiring intersecting dynamics.}
    \label{fig:intersecting}
\end{figure}

We can solve this system to obtain a function that has intersecting trajectories.
The solution is visualized in Figure~\ref{fig:intersecting}, which shows that C-NODE can be used to learn and represent this function $\mathcal{G}$.
It should be noted that this is not the only possible solution to function $\mathcal{G}$, as when $\partial t/\partial s=0$, we fall back to a NODE system with the dynamical system conditioned on the input data. 
In this conditioned setting, we can then represent $\mathcal{G}$ by stopping the dynamics at different times $t$ as in \citet{massaroli2021dissecting}. 

\end{proof}

\subsection{Proof of Proposition \ref{prop:cnodelogprob}}
\label{proof:cnodelogprob}
The proof uses the change of variables formula for a particle that depends on a vector rather than a scalar and it follows directly from the proof given in~\citet[Appendix A]{chen2019neural}.
We provide the full proof for completeness. 

\begin{proof}
Assume $\sum_{i=1}^k\frac{\partial u}{\partial x_i}\frac{dx_i}{ds}$ is Lipschitz continuous in $u$ and continuous in $t$, so every initial value problem has a unique solution \citep{evans2010pde}. Also assume $u(s)$ is bounded. 

Want 
\begin{align*}
    \frac{\partial p(u(s))}{\partial s}=\tr\left (\frac{\partial}{\partial u}\sum_{i=1}^k\frac{\partial u}{\partial x_i}\frac{dx_i}{ds}\right).
\end{align*} 

Define $T_{\epsilon}=u(s+\epsilon)$.
The discrete change of variables states that $u_1=f(u_0)\Rightarrow \log p(u_1)=\log p(u_0)-\log |\det \frac{\partial f}{\partial u_0}|$ \citep{JimenezRezende2015VariationalIW}.

Take the limit of the time difference between $u_0$ and $u_1$, by definition of derivatives, 
\begin{align*}
    \frac{\partial \log p(u(s))}{\partial t}&=\lim_{\epsilon\rightarrow 0^+}\frac{\log p(u(s+\epsilon))-\log p(u(s))}{\epsilon}\\
    &=\lim_{\epsilon\rightarrow 0^+}\frac{\log p(u(s))-\log|\det \frac{\partial}{\partial u}T_{\epsilon}(u(t))|-\log p(u(s))}{\epsilon}\\
    &=-\lim_{\epsilon\rightarrow 0^+}\frac{\log|\det \frac{\partial}{\partial u}T_{\epsilon}(u(s))|}{\epsilon}\\
    &=-\lim_{\epsilon\rightarrow 0^+}\frac{\frac{\partial }{\partial \epsilon}\log|\det \frac{\partial}{\partial u}T_{\epsilon}(u(s))|}{\frac{\partial }{\partial \epsilon}\epsilon}\\
    &=-\lim_{\epsilon\rightarrow 0^+}\frac{\partial }{\partial \epsilon}\log|\det \frac{\partial}{\partial u}T_{\epsilon}(u(s))| -\lim_{\epsilon\rightarrow 0^+}\frac{\partial }{\partial \epsilon}\log|\det \frac{\partial}{\partial u}T_{\epsilon}(u(s))|\\
    &=-\lim_{\epsilon\rightarrow 0^+}\frac{1}{|\det \frac{\partial}{\partial u}T_{\epsilon}(u(s))|}\frac{\partial }{\partial \epsilon}|\det \frac{\partial}{\partial u}T_{\epsilon}(u(s))|\\
    &=-\frac{\lim_{\epsilon\rightarrow 0^+}\frac{\partial }{\partial \epsilon}|\det \frac{\partial}{\partial u}T_{\epsilon}(u(s))|}{\lim_{\epsilon\rightarrow 0^+}|\det \frac{\partial}{\partial u}T_{\epsilon}(u(s))|}\\
    &=-\lim_{\epsilon\rightarrow 0^+}\frac{\partial }{\partial \epsilon}|\det \frac{\partial}{\partial u}T_{\epsilon}(u(s))|\\
\end{align*}
The Jacobi's formula states that if $A$ is a differentiable map from the real numbers to $n\times n$ matrices, then $\frac{d}{dt}\det A(t)=tr(adj(A(t))\frac{d A(t)}{dt})$, where $adj$ is the adjugate. Thus, have
\begin{align*}
    \frac{\partial \log p(u(t))}{\partial t}&=-\lim_{\epsilon\rightarrow 0^+}\tr\left[\adj\left(\frac{\partial }{\partial u}T_{\epsilon}(u(s))\right)\frac{\partial }{\partial \epsilon}\frac{\partial }{\partial u}T_{\epsilon}(u(s))\right]\\
    &=-\tr\left[\left(\lim_{\epsilon\rightarrow 0^+}\adj\left(\frac{\partial}{\partial u}T_{\epsilon}(u(t))\right)\right)\left(\lim_{\epsilon\rightarrow 0^+}\frac{\partial}{\partial\epsilon}\frac{\partial}{\partial u}T_{\epsilon}(u(s))\right)\right]\\
    &=-\tr\left[\adj\left(\frac{\partial}{\partial u}u(t)\right)\lim_{\epsilon\rightarrow 0^+}\frac{\partial}{\partial\epsilon}\frac{\partial}{\partial u}T_{\epsilon}(u(s))\right]\\
    &=-\tr\left[\lim_{\epsilon\rightarrow 0^+}\frac{\partial}{\partial\epsilon}\frac{\partial}{\partial u}T_{\epsilon}(u(s))\right]\\
\end{align*}
    
Substituting $T_{\epsilon}$ with its Taylor series expansion and taking the limit, we have
\begin{align*}
    \frac{\partial \log p(u(t))}{\partial t}&=-\tr\left(\lim_{\epsilon\rightarrow 0^+}\frac{\partial}{\partial \epsilon}\frac{\partial}{\partial u}\left(u+\epsilon \frac{du}{ds}+\mathcal{O}(\epsilon^2)+\mathcal{O}(\epsilon^3)+...\right)\right)\\
    &=-\tr\left(\lim_{\epsilon\rightarrow 0^+}\frac{\partial}{\partial \epsilon}\frac{\partial}{\partial u}\left(u+\epsilon \sum_{i=1}^k\frac{\partial u}{\partial x_i}\frac{dx_i}{ds}+\mathcal{O}(\epsilon^2)+\mathcal{O}(\epsilon^3)+...\right)\right)\\
    &=-\tr\left(\lim_{\epsilon\rightarrow 0^+}\frac{\partial}{\partial \epsilon}\left(I+\frac{\partial }{\partial u}\epsilon \sum_{i=1}^k\frac{\partial u}{\partial x_i}\frac{dx_i}{ds}+\mathcal{O}(\epsilon^2)+\mathcal{O}(\epsilon^3)+...\right)\right)\\
    &=-\tr\left(\lim_{\epsilon\rightarrow 0^+}\left(\frac{\partial }{\partial u}\sum_{i=1}^k\frac{\partial u}{\partial x_i}\frac{dx_i}{ds}+\mathcal{O}(\epsilon)+\mathcal{O}(\epsilon^2)+...\right)\right)\\
    &=-\tr\left(\frac{\partial }{\partial u}\sum_{i=1}^k\frac{\partial u}{\partial x_i}\frac{dx_i}{ds}\right)\\
\end{align*}
\end{proof}

\subsection{Proof of Proposition \ref{prop:chomeo}}
\label{proof:chomeo}
\begin{proof}
To prove proposition \ref{prop:chomeo}, need to show that for any homeomorphism $h(\cdot)$, there exists a $u(s,u_0)\in\mathbb{R}^n$ following a C-NODE system such that $u(s=T,u_0)=h(u_0)$. 

Without loss of generality, say $T=1$.

Define C-NODE system 

\begin{align*}
    \begin{cases}
        \frac{du}{ds}=\frac{\partial u}{\partial x}\frac{dx}{ds}+\frac{\partial u}{\partial t}\frac{dt}{ds},\\
        \frac{dx}{ds}(s,u_0)=1,\\
        \frac{\partial u}{\partial x}(u(x,t))=h(u_0),\\
        \frac{dt}{ds}(s,u_0)=u_0,\\
        \frac{\partial u}{\partial t}(u(x,t))=-1.\\
    \end{cases}
\end{align*} 

Then, $\frac{du}{ds}=h(u_0)-u_0$. At $s=1$, have 
\begin{align*}
    u(s=1,u_0)&=u(s=0,u_0)+\int_0^1\frac{du}{ds}ds\\
    &=u_0+\int_0^1\frac{\partial u}{\partial x}\frac{dx}{ds}+\frac{\partial u}{\partial t}\frac{dt}{ds}ds\\
    &=u_0+\int_0^1h(u_0)\cdot 1+(-1)\cdot u_0ds\\
    &=u_0+h(u_0)-u_0\\
    &=h(u_0).
\end{align*}  

The inverse map will be defined by integration backwards. Specifically, have 

\begin{align*}
    u(s=0,u_0)&=u(s=1,u_0)+\int_1^0\frac{du}{ds}ds\\
    &=h(u_0)-\int_0^1\frac{\partial u}{\partial x}\frac{dx}{ds}+\frac{\partial u}{\partial t}\frac{dt}{ds}ds\\
    &=h(u_0)-\int_0^1h(u_0)\cdot 1+(-1)\cdot u_0ds\\
    &=h(u_0)-h(u_0)+u_0\\
    &=u_0.
\end{align*}  

Thus, for any homeomorphism $h(\cdot)$, there exists a C-NODE system, such that forward integration for time $s=1$ is equivalent as applying $h(\cdot)$, and backward integration for time $s=1$ is equivalent to applying $h^{-1}(\cdot)$.
\end{proof}

\section{Ablation Study}
\label{sec:ablation}

In previous experiments, we represent $\partial \mathbf{u}/\partial x_i$ with separate and independent neural networks $\mathbf{c}_i(\mathbf{u},\theta)$. Here, we represent all $k$ functions as a vector-valued function $[\partial \mathbf{u}/\partial x_1,...,\partial \mathbf{u}/\partial x_k]^T$. We approximate this vector-valued function with a neural network $\mathbf{c}(\mathbf{u},\theta)$. The model is trained using the Euler solver to have better training stability when the neural network has a large number of parameters. Experiment details for the ablation study is as shown in Figures \ref{fig:ablation_mnist}, \ref{fig:ablation_svhn}, \ref{fig:ablation_cifar}.

\begin{figure*}[hbt!]
    \centering
    \includegraphics[width=0.9\textwidth,trim=0cm 0cm 0cm 0cm,clip]{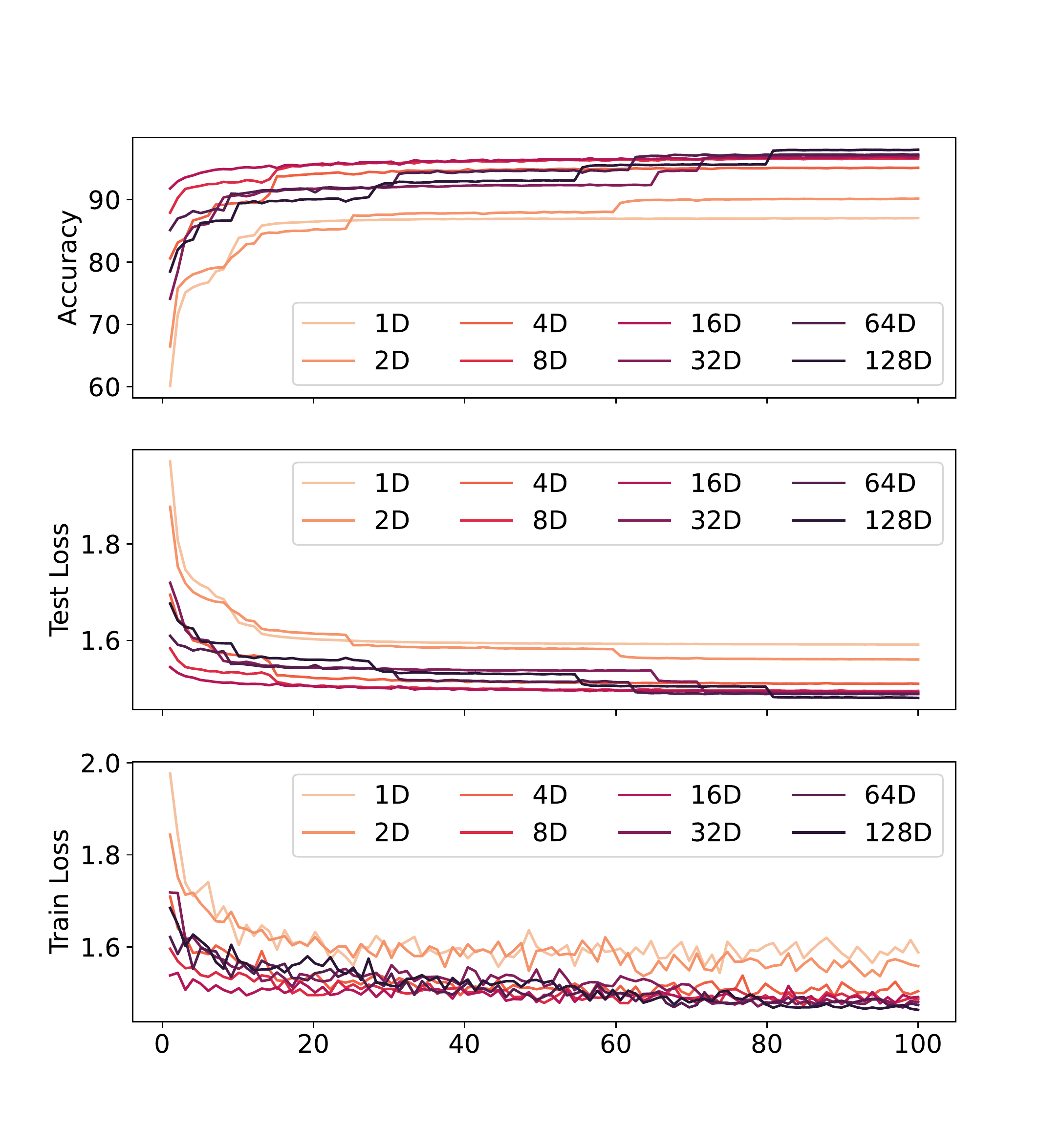}
    \caption{The training process averaged over 4 runs of C-NODE with 1, 2, 4, 8, 16, 32, 64, 128, 256, 512, and 1024 dimensions on the MNIST dataset. The first row is the accuracy of prediction, the second row is the testing error, and the third row is the training error.}
    \label{fig:ablation_mnist}
\end{figure*}

\begin{figure*}[hbt!]
    \centering
    \includegraphics[width=0.9\textwidth,trim=0cm 0cm 0cm 0cm,clip]{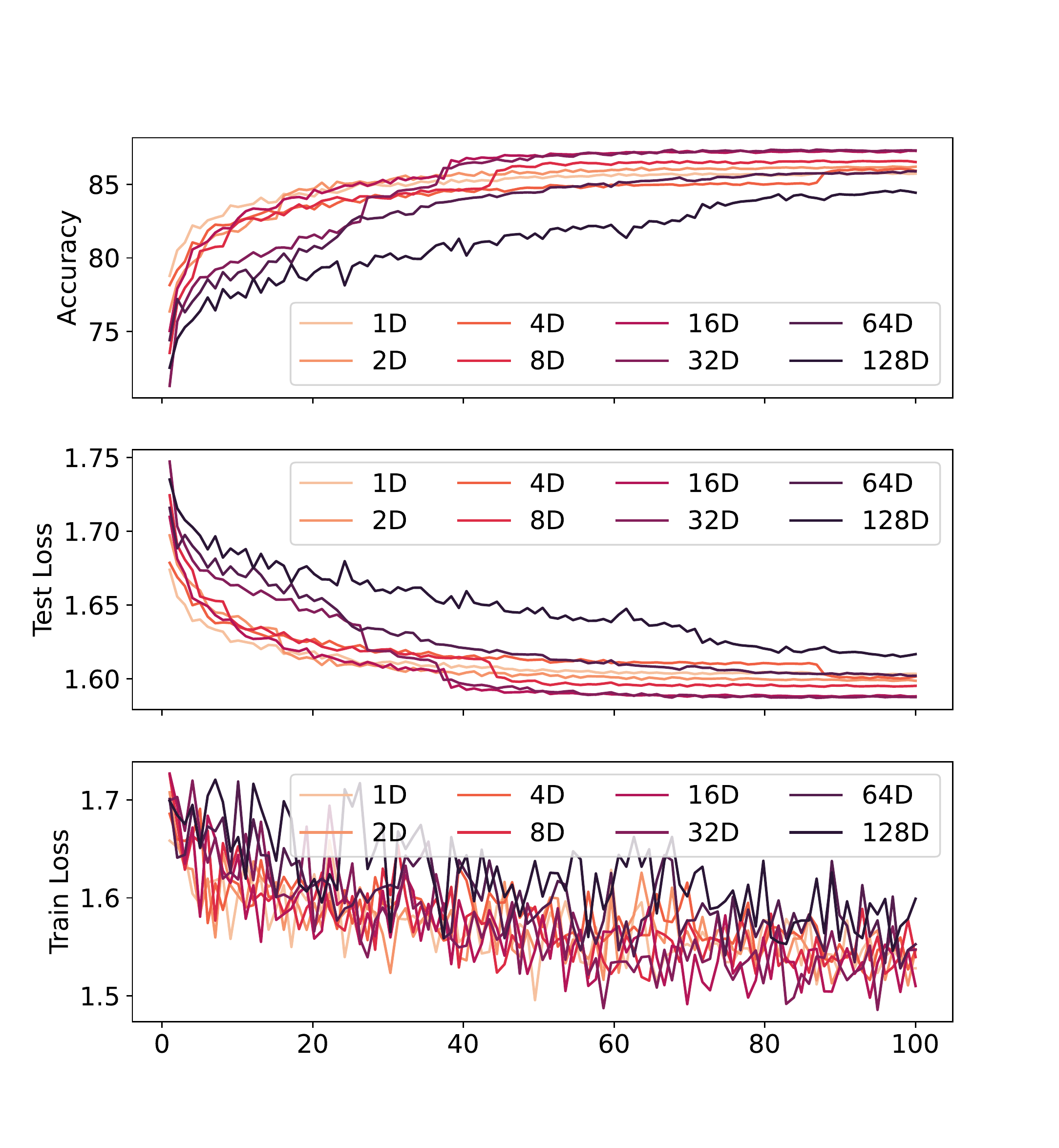}
    \caption{The training process averaged over 4 runs of C-NODE with 1, 2, 4, 8, 16, 32, 64, and 128 dimensions on the SVHN dataset. The first row is the accuracy of prediction, the second row is the testing error, and the third row is the training error.}
    \label{fig:ablation_svhn}
\end{figure*}

\begin{figure*}[hbt!]
    \centering
    \includegraphics[width=0.9\textwidth,trim=0cm 0cm 0cm 0cm,clip]{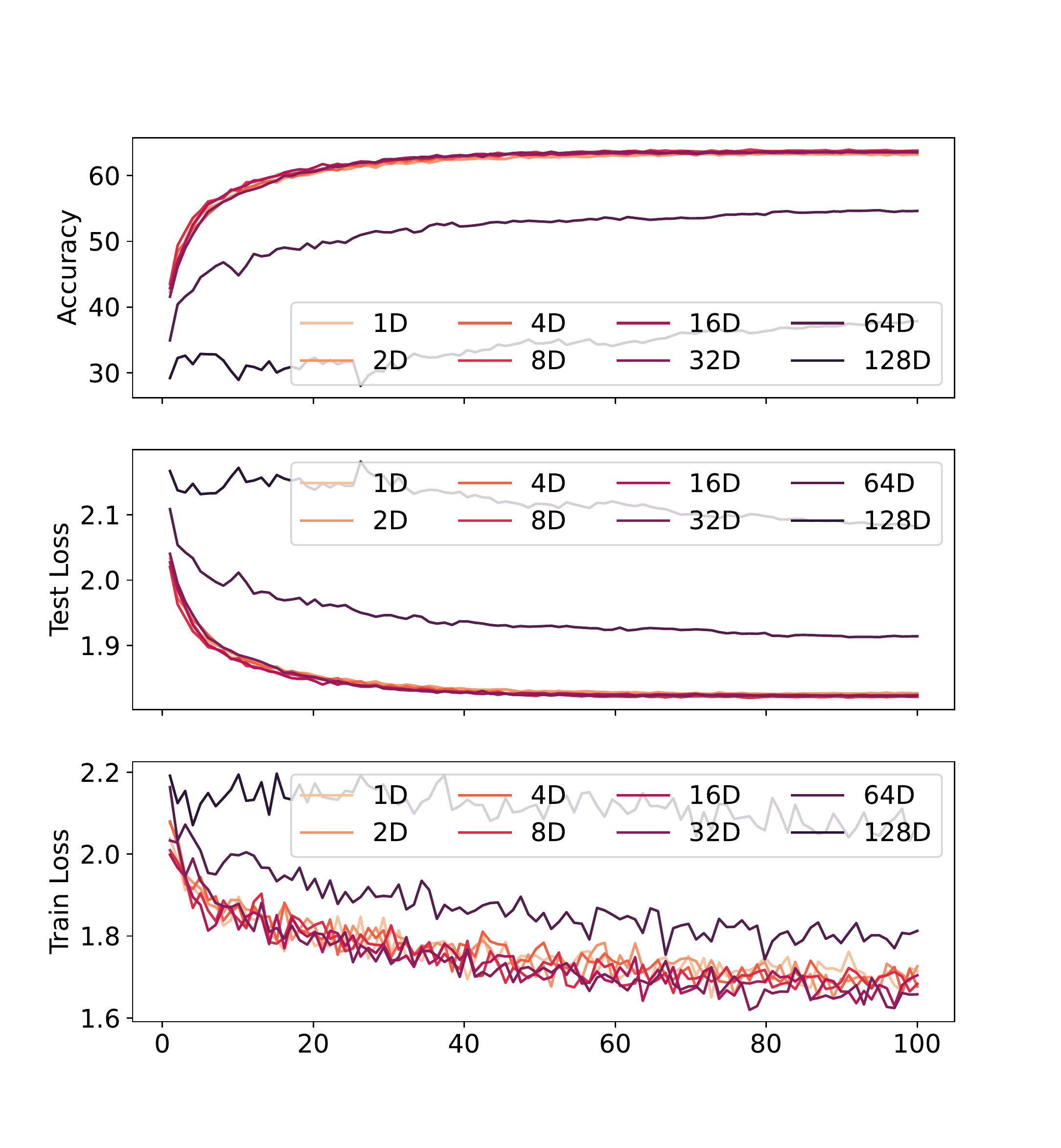}
    \caption{The training process averaged over 4 runs of C-NODE with 1, 2, 4, 8, 16, 32, 64, and 128 dimensions on the CIFAR-10 dataset. The first row is the accuracy of prediction, the second row is the testing error, and the third row is the training error.}
    \label{fig:ablation_cifar}
\end{figure*}

\end{document}